\pdfoutput=1
\documentclass[sigconf]{acmart}
\usepackage{hyperref}
\hypersetup{
    colorlinks=true,
    linkcolor=black,
    urlcolor=black,
    citecolor=black,
    breaklinks=true 
}

\def\equationautorefname~#1\null{Equation~(#1)\null}
\usepackage{breakurl}
\usepackage{color}
\usepackage{bm}
\usepackage{xspace}
\usepackage{algorithmic}
\usepackage{algorithm}
\usepackage{mdwlist}    
\usepackage{paralist} 

\usepackage{utfsym}

\usepackage{adjustbox}
\usepackage{array}
\usepackage{booktabs}
\usepackage{multirow}

\usepackage{array,graphicx}
\usepackage{booktabs}
\usepackage{pifont}
\usepackage{color}

\usepackage{colortbl}
\definecolor{lightgray}{gray}{0.85}
\usepackage{multirow} 

\usepackage{fancybox} 
\usepackage{amsmath}
\usepackage{amsfonts}


\setlength{\textfloatsep}{1.0pt}

\newcommand{\TSK}[1]{#1}

\newcommand{\vswd}{\vspace{0.0em}}
\newcommand{\bit}{\vspace{-0em}\begin{itemize}}
\newcommand{\eit}{\end{itemize}\vspace{-0.2em}}
\newcommand{\ben}{\vspace{-0em}\begin{enumerate}}
\newcommand{\een}{\end{enumerate}\vspace{-0.2em}}
\newcommand{\bea}{\vspace{-0em}\begin{eqnarray}}
\newcommand{\eea}{\end{eqnarray}\vspace{-0.0em}}
\newcommand{\beq}{\vspace{-0.0em}\begin{equation}}
\newcommand{\eeq}{\end{equation}\vspace{-0.0em}}

\renewcommand{\bit}{\vswd\begin{compactitem}}
\renewcommand{\eit}{\end{compactitem}\vswd}
\renewcommand{\ben}{\vswd\begin{compactenum}}
\renewcommand{\een}{\end{compactenum}\vswd}
    


\newcommand{\argmax}{\mathop{\rm arg~max}\limits}
\newcommand{\argmin}{\mathop{\rm arg~min}\limits}

\newcommand{\shapeN}{\in\mathbb{N}}  


\newcommand{\Ex}{{x}}




\newcommand{\tensor}{\mathcal{X}}

\newcommand{\tensorC}{\mathcal{X}^{C}}



\newcommand{\nfactor}[1]{r_1}

\newcommand{\segmentset}{\mathcal{S}}

\newcommand{\score}[1]{score(#1)}
\newcommand{\norm}{norm}



\newcommand{\hide}[1]{}
\newcommand{\mypara}[1]{\vspace{1.00em}\noindent\textbf{#1.}}

\newcommand{\myparaitemize}[1]{\noindent{\textbf{#1.}}}

\newcommand{\prop}[1]{({\bf{#1}})\xspace}

\newcommand{\myiparapara}[1]{\vspace{0.00em}{\em #1:}}

\newtheorem{observation}{Observation}
\newtheorem{informalProblem}{InformalProblem}
\newtheorem{problem}{Problem}
\newtheorem{definition}{Definition}

\newcommand{\attribute}{attribute\xspace}
\newcommand{\attributes}{attributes\xspace}

\newcommand{\unit}{unit\xspace}
\newcommand{\units}{units\xspace}

\newcommand{\topic}{component\xspace}
\newcommand{\topics}{components\xspace}
\newcommand{\Topic}{Component\xspace}

\newcommand{\timestamp}{time\xspace}
\newcommand{\method}{\textsc{CubeScope}\xspace}
 


\newcommand{\Etstream}{Event tensor stream\xspace}
\newcommand{\etstream}{event tensor stream\xspace}

\newcommand{\Matt}{\mathbf{A}}
\newcommand{\Mtime}{\mathbf{B}}

\newcommand{\Patt}{\mathbf{\hat{A}}}
\newcommand{\Ptime}{\mathbf{\hat{B}}}



\newcommand{\Eatt}{{a}}
\newcommand{\Etime}{{b}}

\newcommand{\patt}{\hat{a}}
\newcommand{\ptime}{\hat{b}}

\newcommand{\regime}{\theta}
\newcommand{\regimeset}{\Theta}
\newcommand{\regimeassignment}{\mathcal{S}}
\newcommand{\regimeassign}{s}
\newcommand{\cand}{\mathcal{C}}

\newcommand{\alt}{e}

\newcommand{\costM}[1]{<#1>}
\newcommand{\costC}[2]{<#1|#2>}
\newcommand{\costT}[2]{<#1;#2>}
\newcommand{\cF}{c^F}

\newcommand{\nunits}{U}
\newcommand{\nmode}{M}
\newcommand{\ntopic}{K}
\newcommand{\duration}{T}
\newcommand{\nregime}{R}
\newcommand{\nshiftp}{G}

\newcommand{\lunit}{u}
\newcommand{\lmode}{m}
\newcommand{\ltopic}{k}
\newcommand{\ltime}{t}
\newcommand{\lregime}{r}
\newcommand{\lshiftp}{g}

\newcommand{\ntotal}{N}	
\newcommand{\mathN}{\mathbb{N}}
\newcommand{\mathR}{\mathbb{R}}

\newcommand{\methodalgo}{{\textsc{CubeScope}}\xspace}
\newcommand{\decomposition}{{\textsc{C-Decomposer}}\xspace}
\newcommand{\compression}{{\textsc{C-Compressor}}\xspace}



\newcommand{\ytaxi}{\textit{NYC-Taxi}\xspace}
\newcommand{\jewelry}{\textit{Jewelry}\xspace}
\newcommand{\electronics}{\textit{Electronics}\xspace}
\newcommand{\nybike}{\textit{Bike-Share}\xspace}
\newcommand{\kddcup}{\textit{AirForce}\xspace}
\newcommand{\ciexternal}{\textit{External}\xspace}
\newcommand{\ciinternal}{\textit{OpenStack}\xspace}
\newcommand{\kyoto}{\textit{Kyoto}\xspace}

\copyrightyear{2023} 
\acmYear{2023} 
\setcopyright{acmlicensed}
\acmConference[WWW '23]{Proceedings of the ACM Web Conference 2023}{April 30-May 4, 2023}{Austin, TX, USA}
\acmBooktitle{Proceedings of the ACM Web Conference 2023 (WWW '23), April 30-May 4, 2023, Austin, TX, USA}
\acmPrice{15.00}
\acmDOI{10.1145/3543507.3583370}
\acmISBN{978-1-4503-9416-1/23/04}
\author{Kota Nakamura}
\affiliation{SANKEN, Osaka University, Japan}
\email{kota88@sanken.osaka-u.ac.jp}
\author{Yasuko Matsubara}
\affiliation{SANKEN, Osaka University, Japan}
\email{yasuko@sanken.osaka-u.ac.jp}
\author{Koki Kawabata}
\affiliation{SANKEN, Osaka University, Japan}
\email{koki@sanken.osaka-u.ac.jp}
\author{Yuhei Umeda}
\affiliation{AI Lab., Fujitsu, Japan}
\email{umeda.yuhei@fujitsu.com}
\author{Yuichiro Wada}
\affiliation{AI Lab., Fujitsu; AIP, RIKEN, Japan}
\email{wada.yuichiro@fujitsu.com}
\author{Yasushi Sakurai}
\affiliation{SANKEN, Osaka University, Japan}
\email{yasushi@sanken.osaka-u.ac.jp}
\begin{document}
\title{Fast and 
Multi-aspect Mining 
of \\
Complex Time-stamped Event Streams
}
\begin{abstract}
    Given a
huge,
online stream of time-evolving events with multiple attributes,
such as online shopping logs: 
{\it(item, price, brand, \timestamp)},
how can we summarize large, dynamic high-order tensor streams? 
How can we see any hidden patterns, rules,
and 
anomalies?
Our answer is to focus on
two types of patterns, 
i.e., ``regimes'' and ``components'',
over high-order tensor streams,
for which we
present an efficient and effective method, 
namely \method.
Specifically, 
it identifies any sudden discontinuity 
and recognizes distinct dynamical patterns, ``regimes''
(e.g., weekday/weekend/holiday patterns).
In each regime, 
it also performs multi-way summarization for all attributes 
(e.g., item, price, brand, and \timestamp) 
and discovers hidden ``\topics''
representing
latent groups
(e.g., item/brand  groups)
and their relationship.
Thanks to its concise but effective summarization,
\method 
can also
detect the sudden appearance of anomalies 
and identify the types of anomalies that occur in practice.

Our 
proposed method has the following properties:
(a) {\it Effective:}
it
captures
dynamical multi-aspect patterns,
i.e., regimes and \topics,
and
statistically summarizes
all the events;
(b) {\it General:}
it is practical
for successful application to
data compression, 
pattern discovery, and anomaly detection on 
various types of tensor streams;
(c) {\it Scalable:}
our algorithm 
does not depend on the length of the data stream and its dimensionality.
Extensive experiments on 
real datasets demonstrate that 
\method finds meaningful patterns and anomalies correctly, 
and consistently outperforms the state-of-the-art methods as regards accuracy and execution speed.

\end{abstract}
\ccsdesc[500]{Information systems~Data mining}
\maketitle
\section{Introduction}
    \label{010intro}
    \TSK{
\begin{figure*}[t]
        \vspace{-1.8em}
     \centering
  \includegraphics[width=\linewidth]{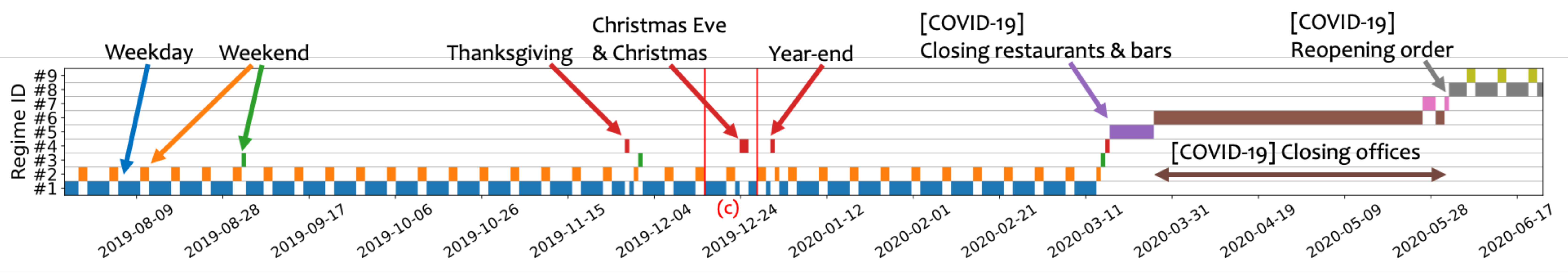}\\
     \vspace{-0.9em}
     (a) Regime identification for social mobility event streams 
     \begin{tabular}{ccc|c}
     \hspace{-1em}
     \begin{minipage}{0.2\linewidth}
     \centering
     \includegraphics[width=0.75\linewidth]{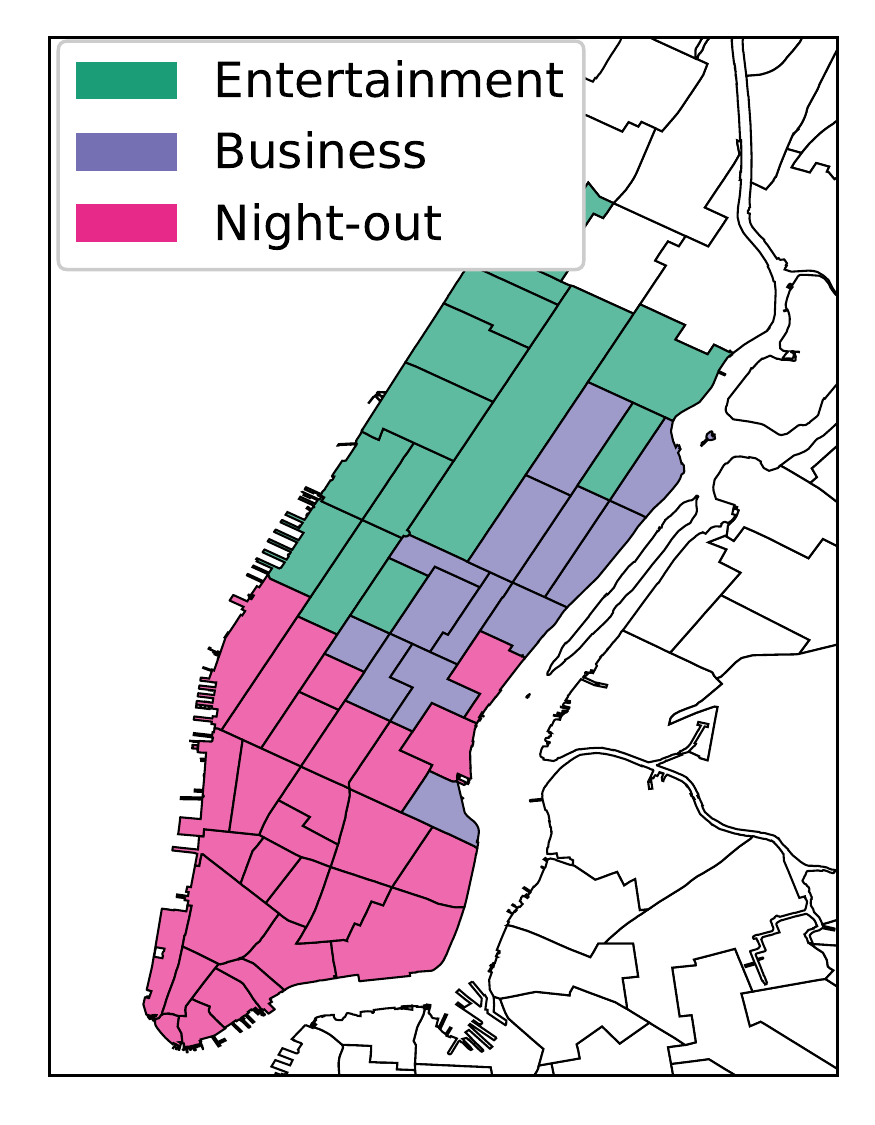}\\
     (b-i) Snapshot of \\
     three \topics \\
     on pick-up location  
     \end{minipage}
     &  
     \hspace{-4em}
     \begin{minipage}{0.2\linewidth}
     \centering
     \includegraphics[width=0.75\linewidth]{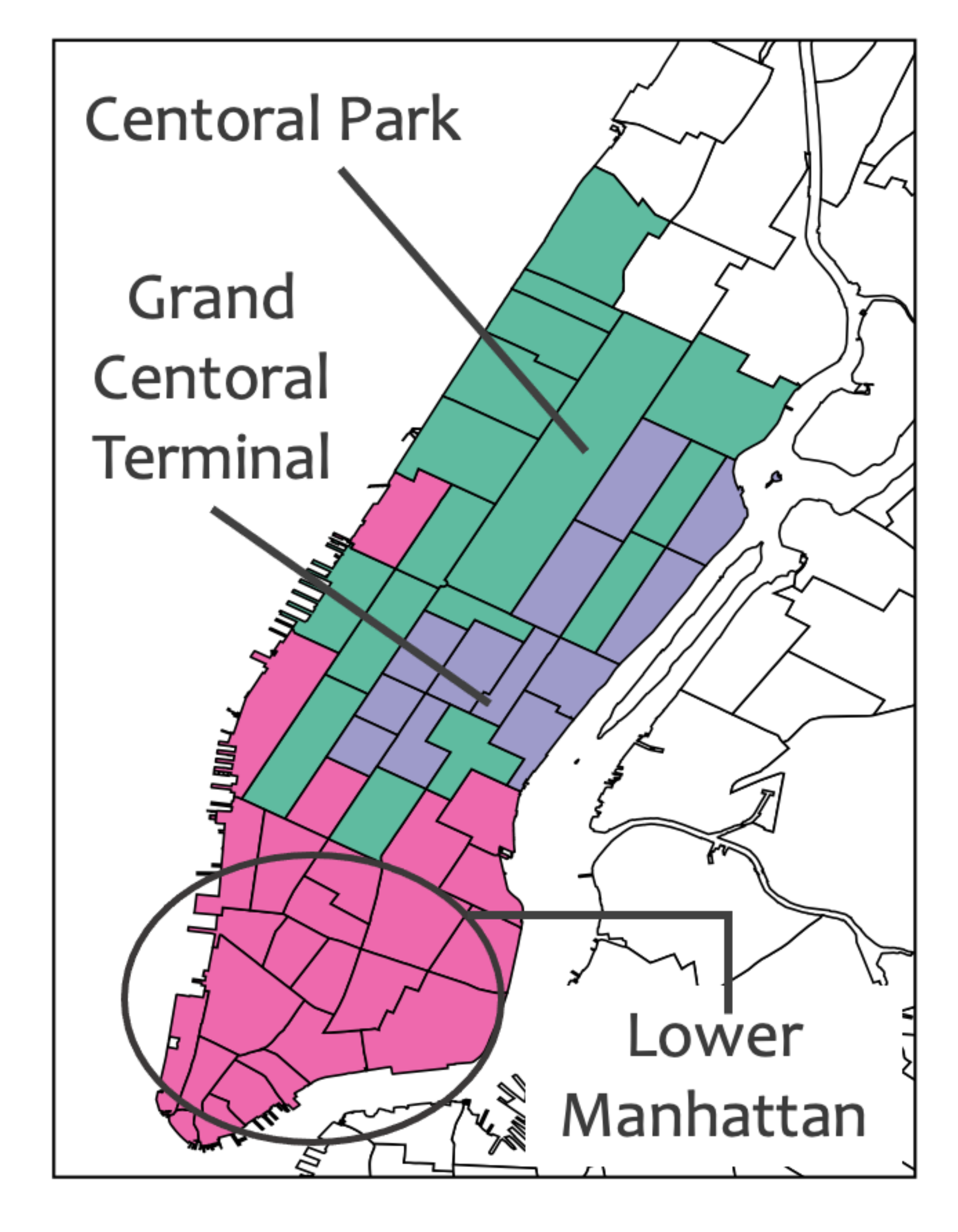} \\
     (b-ii) Snapshot of \\
     three \topics \\
     on drop-off location
     \end{minipage}
 	 \vspace{-0.60em}
 	 \hspace{-4.5em}
     &
     \begin{minipage}{0.45\linewidth}
     \vspace{-0.3em}
     \centering
     \includegraphics[width=0.85\linewidth]{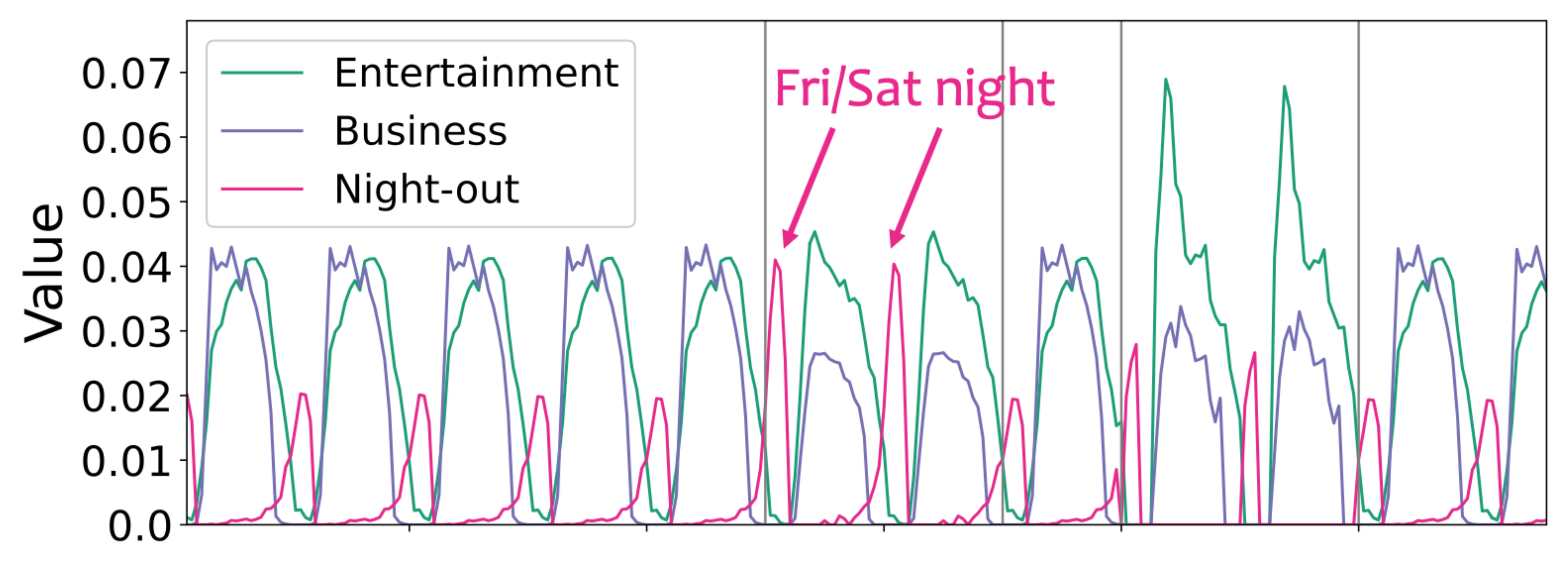}
     \includegraphics[width=0.85\linewidth]{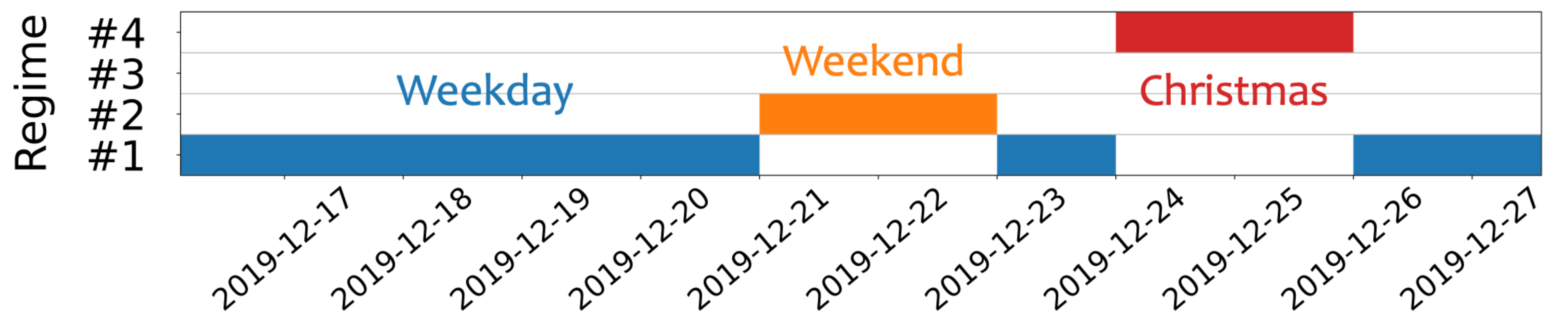}
     \\
     \vspace{-0.4em}
     (c) Time evolution of three \topics \\
     around mid-December     
    \end{minipage}
    \hspace{-1.5em}
     &
    \hspace{-1em}
     \begin{minipage}{0.33\linewidth}
        \vspace{-1em}
         \centering
         \includegraphics[width=0.9\columnwidth]{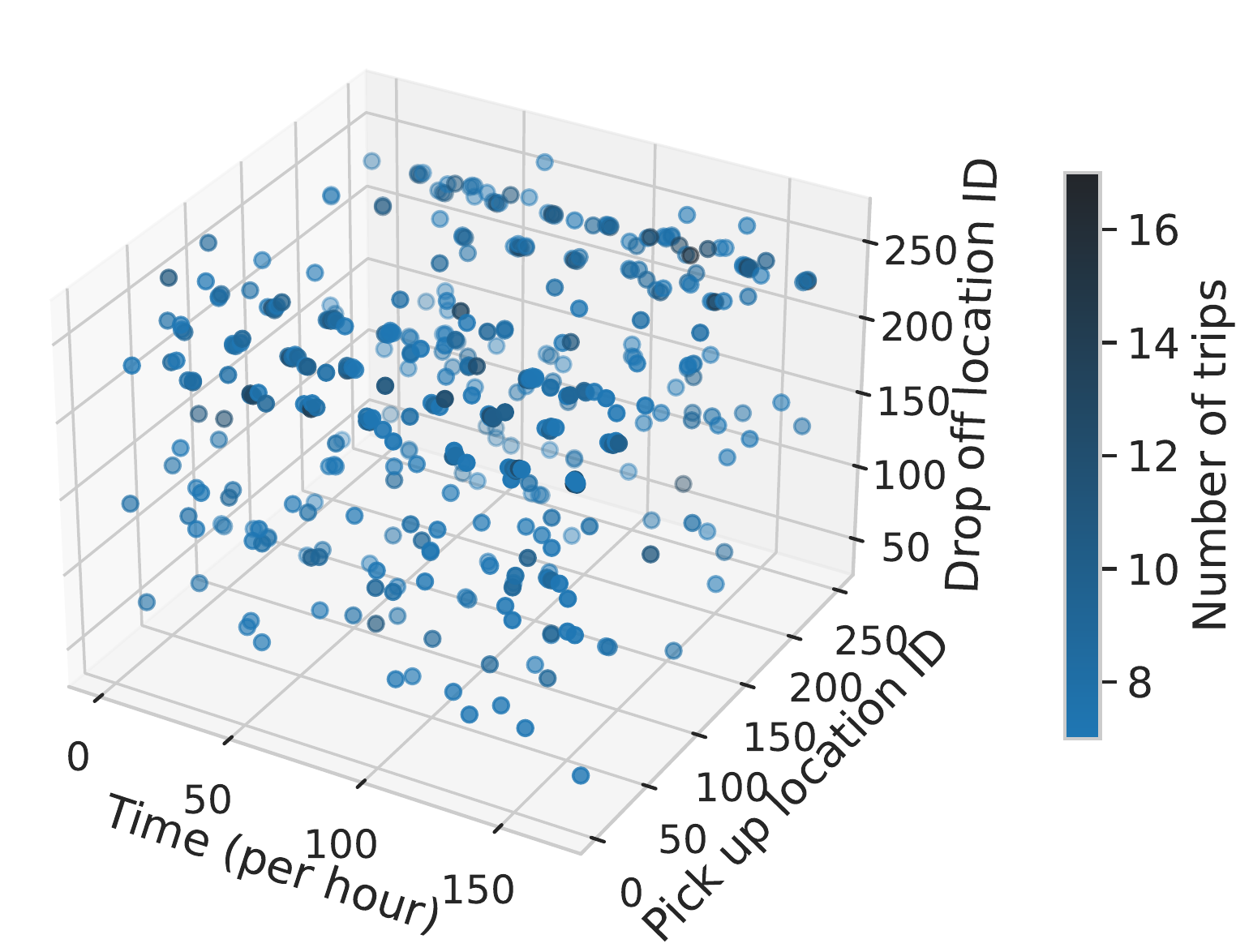}\\
         \hspace{-2em}
         (d) Original event data stream \\  
         \hspace{-3em}
	     for NYC taxi rides (one week)
         \end{minipage}
     \end{tabular}
     \vspace{-0.5em}
     \caption{
        Real-time modeling of \method on New York City taxi rides:
        (a) It incrementally identifies distinct 
        time-evolving 
        patterns (i.e., regimes) and their shifting points.
        Specifically, 
        Regime \#1 (blue) coincides with weekdays, 
        while
        Regimes \#2, \#3, and \#4 
        (orange, green, and red)
        capture weekends and public holidays.
        Also, it can adaptively recognize 
        the sudden regime transitions (Regimes \#5, \#6, $\cdots$)
        that reflect social conditions under the COVID-19 pandemic.
        It finds 
        \topics that are
        interpretable
        summaries 
        for all attributes (i.e., pick-up, drop-off, \timestamp),
        especially
        for 
        (b) pick-up/drop-off locations 
        and 
        (c) time \attribute .
        (d)
        The original data 
        is a sparse and high-dimensional tensor.
        It exhibits no obvious \topics or regimes.
    }
    \vspace{-1.5em}
    \label{fig:preview}
\end{figure*}
}

Given a large, online stream of time-stamped events,
how can we statistically summarize all the event streams 
and find important patterns, 
rules, 
and 
anomalies?
Time-stamped event data
are generated and collected by many real applications
\cite{de2016iot,DBLP:conf/www/LiuSWTSPL19,wang2015rubik,DBLP:conf/kdd/BaytasXZWJZ17},
including
online marketing analytics \cite{DBLP:conf/www/SakuraiMF16,kraus2019personalized},
social network/location-based services \cite{DBLP:conf/kdd/ChoML11,DBLP:conf/kdd/OkawaIK0TU19},
and cybersecurity systems \cite{DBLP:conf/www/0001J0KH21,shin2017densealert},
with increasingly larger sizes and faster rates of transactions.
For example, 
an online shopping service 
could generate millions of logging entries every second, 
with rich information about items and users.
The service providers
would like to 
send
targeted advertisements
and
detect 
fraudulent activities
by investigating 
online purchasing patterns and hidden user/item relationships.

Here, let us 
assume that we have a large collection of event logs,
consisting of multiple attributes, 
e.g., online shopping: 
{\it (item, price, brand, \timestamp)} 
and local mobility activities: 
{\it (pick-up and drop-off locations, \timestamp)},
where huge numbers of event entries 
arrive online at high bit rates,
which 
we shall refer to 
as
``\textit{complex time-stamped event streams}''. 
%
These data are represented as high-order 
tensor streams,
e.g., a $4$th-order item-price-brand-time tensor stream,
unlike 
a previously considered 
multivariate time series \cite{hallac2017toeplitz},
tensor \cite{kolda2009tensor},
or stream of elements 
\cite{manzoor2018xstream},
specifically, 
as mentioned later in Section\ref{sec:designphilo},
whose high-dimensional, sparse, and semi-infinite 
nature
derails existing methods and 
even our interpretation of data.
%
\textit{So what is a good representation of complex
time-stamped event streams?}
This is exactly the problem we focus on in this work.
We first present
a compact yet powerful
representation
that summarizes 
a semi-infinite collection of tensor streams.
Specifically,
we 
aim to capture
two types of patterns, 
i.e., {\it ``regimes''} and {\it ``components''}.

In practice,
real-life data streams 
contain
various types of distinct temporal dynamical patterns 
of different durations,
namely, {\it ``regimes''},
such as the weekday/weekend/holiday patterns
of online shopping services or taxi rides.
%
In each regime,
a set of events, consisting of multiple attributes,
has similar 
behavior
and latent interactions.
We introduce the concept of latent {\it ``\topics''},
which capture hidden groups 
in each \attribute
(e.g., item groups and typical pick-up locations) 
and their relationships.

An important application scenario,
for example, 
in cybersecurity, 
multiple types of intrusions/anomalies,
such as denial of service or port scanning attacks,
occur suddenly
and
need to be detected and analyzed
in real-time
to minimize harm.
So we would also like to answer the question:
\textit{
How can we quickly detect anomalies 
and identify their types?
}
However, it is extremely challenging
because 
signs of 
the anomalies
appear in one or more attributes 
(e.g., source IP address, packet size,$\ldots$),
and 
event streams evolve over time,
where new types of anomalies can arise
and the concept of normal behavior changes.

In this paper, 
we present \method ,
an efficient and effective mining approach
capable of dealing with the above questions.
\method monitors a high-order tensor stream and incrementally 
recognizes
dynamical multi-aspect patterns,
i.e., regimes and \topics, 
and anomalies,
while updating the information for each.
Intuitively, 
the problem we wish to solve is as follows:
\begin{informalProblem}
\textbf {Given}
a high-order tensor stream
$\tensor$,
which consists of events with 
multiple \attributes and timestamps,
\bit
\item
\textbf {Find}
a compact description of $\tensor$ that summarizes all events,
\bit
\item 
distinct dynamical patterns
(i.e., regimes),
\item 
multi-aspect latent trends
(i.e., \topics),
\eit
\item
\textbf{Report} anomalies and their types
\eit
incrementally and quickly, at any point in time.
 
\end{informalProblem}

\myparaitemize{Preview of Results}
\autoref{fig:preview}~(a)-(c) shows 
some of our discoveries 
on local mobility data.
This dataset consists of taxi ride events 
{\it(pick-up location ID, drop-off location ID, \timestamp)}
in New York City, 
with hourly timestamps,
from Jul. 1st, 2019, to Jun. 30th, 2020. 
\autoref{fig:preview}~(d) shows 
the original data.
The data are represented as the stream of the 3rd-order tensor, 
where each aspect indicates each \attribute.
Note that this tensor is sparse and high-dimensional, 
i.e., there are numerous dimensions in each aspect/attribute.
It does not exhibit 
any obvious patterns, 
neither regimes nor \topics.

\bit
\item \myiparapara{Regime identification}
As shown in \autoref{fig:preview}~(a),
\method
incrementally discovers 
nine regimes
(i.e., distinct time-evolving patterns).
Specifically, 
it finds Regimes~\#1 (blue) and
\#2 (orange),
corresponding to
weekdays and weekends, respectively.
Around the end of the year, 
it recognizes new regimes, 
Regimes~\#3, \#4 (green, red),
which coincided with certain festive days,
including Thanksgiving, Christmas, and Year-end.
The new Regimes~\#5 (purple) and \#6 (brown) indicate
abrupt changes in human movement.
In fact,
due to the emergence of a new viral pandemic, COVID-19,
the city ordered restaurants/bars to close on March 16th 
\cite{nyc_close_res}
and then 
offices to close on March 22nd
\cite{ny_pause_program}.
Finally,
our method generates Regimes~\#8 (gray) and \#9 (dark yellow)
for the new weekday and weekend human mobility patterns,
respectively,
after the reopening order on June 8th \cite{ny_reopen}, 
which allowed 
office-based workers and in-store retail shopping to resume.
\item \myiparapara{Multi-aspect \topic analysis}
\method provides
\topics
that are interpretable summaries for each \attribute.
\autoref{fig:preview}~(b) shows
the three major \topics for pick-up/drop-off locations in Regime~\#1,
where
we 
manually named them ``Entertainment'', ``Business'', and ``Night-out''.
These areas agree with our intuition:
the Entertainment \topic is allocated around Central Park and nearby museums,
the Business \topic is concentrated on major railway stations 
such as Grand Central Terminal,
and 
the Night-out \topic 
corresponds 
to the area around Lower Manhattan,
which has a large number of restaurants and bars.
\autoref{fig:preview}~(c) shows 
three major \topics for time attribute 
around mid-December.
They show the spiking of the Entertainment \topic 
during Christmas.
The Business \topic 
consistently exhibits high peaks on weekdays,
while it had lower value on weekends and 
Christmas.
Lastly, 
the Night-out \topic  
shows midnight peaks,
especially on weekends (i.e., Fri/Sat midnight).
\eit 

\mypara{Contributions}
The main contributions of our paper are:
\bit
    \item  
        \textbf{Effective}:
        We introduce dynamical multi-aspect patterns
        (i.e., 
        regimes and \topics),
        which summarize
        high-order 
        tensor streams
        and provide interpretable representations.
        Also, we formulate the summarization problem for capturing these patterns 
        in a data compression paradigm.
    \item
        \textbf{General}:
        To solve the summarization problem,
        we design 
        \method,
        which also performs
        data compression,
        pattern discovery,
        and anomaly detection.
        Our experimental results 
        show its practicality on multiple domains,
        such as online marketing analytics and cybersecurity.
    \item  
        \textbf{Scalable}:
        Our proposed algorithm is fast
        and requires constant computational time 
        both with regard to 
        the entire stream length 
        and the dimensionality
        for each attribute.
\eit
\myparaitemize{Reproducibility}
Our source code and datasets are available at~\cite{WEBSITE}.

\myparaitemize{Outline}
The rest of this paper is organized as follows. 
We first introduce related studies followed by our proposed model and algorithms,
experiments and conclusions.

\section{Related work}
    \label{020related}
    \TSK{
\newcommand*\rot{\rotatebox{90}}

\newcommand*\OK{\ding{51}}
\newcommand*\NO{-}
\newcommand{\SOME}{some}

\begin{table}[t]
\centering
\vspace{-0.8em}
\caption{
Capabilities of approaches.
}
\label{table:capability}
\vspace{-1.2em}
\scalebox{0.8}{
\begin{tabular}{l|ccc|ccc|cc|c}
\toprule
&
\rot{TICC/++} &
\rot{CubeMarker} &
\rot{T-LSTM} & 
\rot{LDA/NTM/++} &
\rot{\textsc{Trimine}} &
\rot{DBSTREAM/++} &
\rot{LOF/++}&
\rot{MemStream } &
\rot{\method} \\

\midrule
\rowcolor{lightgray}
High-dimensional Tensor  &\NO &\SOME &\NO &\NO &\OK &\NO &\NO &\OK &\OK\\
Sparsity                    &\NO &\NO &\OK &\OK &\OK &\NO &\NO &\OK &\OK\\
\rowcolor{lightgray} 
Semi-infinite Data          &\NO &\NO &\NO &\NO &\NO &\OK &\NO &\OK &\OK\\
%
\midrule
Segmentation  &\OK &\OK &\OK &\NO &\NO &\NO &\NO &\NO &\OK\\
\rowcolor{lightgray} 
Data Compression            &\OK &\OK &\NO &\OK &\OK &\OK &\NO &\NO &\OK\\
Anomaly Detection  &\NO &\NO &\NO &\NO &\NO &\NO &\OK &\OK &\OK\\
\rowcolor{lightgray} 
Dynamical Multi-aspect Patterns &\NO &\NO &\NO &\NO &\NO &\NO &\NO &\NO &\OK\\
\bottomrule
\end{tabular}
}
\normalsize
\end{table} 
}
The mining of time-stamped event 
data
has attracted great interest in many fields
\cite{MatsubaraSF15,DBLP:conf/kdd/ShahKZGF15,takahashi2017autocyclone,yang2014finding,MatsubaraSF16,DBLP:conf/www/DeldariSXS21,DBLP:conf/kdd/XueZDDXZC20,DBLP:conf/www/MavroforakisVG17,DBLP:conf/www/YaoHZZA17,kamarthi2022camul,lu2022matrix,hu2021time}.
\autoref{table:capability} illustrates the relative advantages of our method, and only \method meets all the requirements.

\myparaitemize{Modeling Dynamics and Segmentation}
Classical approaches 
such as 
linear dynamical systems (LDS), 
and hidden Markov models (HMM) 
are extended to capture distinct patterns of sequences
as described 
in
\cite{li2010parsimonious,HooiLSF17,tozzo2021statistical,kawabata2019automatic}.
TICC \cite{hallac2017toeplitz}
characterizes the interdependence between multivariate observations 
based on a Markov random field.
Such distinct time series patterns also enable us 
to perform anomaly detection and forecasting \cite{MatsubaraSF14,chen2018neucast,MatsubaraS16,MatsubaraS19}.
Tensor-based approaches 
for time series segmentation 
have been proposed \cite{honda2019multi,kawabata2020non}
that incorporate latent relationships between sequences.
The previous studies are designed for continuous time series
and are thus incapable of modeling sparse tensors.
In recent years, 
many deep neural network models have been proposed
\cite{ma2019learning,lee2020temporal}.
T-LSTM \cite{DBLP:conf/kdd/BaytasXZWJZ17}
identifies disease progression patterns with 
irregular time intervals.
Since these are mostly ``black-box'' models and 
incur high computation costs,
they cannot address streaming data summarization.
Moreover, 
none of the above studies focuses on 
large and sparse tensors with 
a higher order than $3$.

\myparaitemize{Summarization and Clustering}
Probabilistic generative models
\cite{beutel2014cobafi,DBLP:conf/kdd/OkawaIK0TU19}, 
such as latent Dirichlet allocation (LDA) \cite{blei2003latent}
and its variants \cite{iwata2009topic,DBLP:conf/kdd/MengZH0Z020,yin2018model},
are broadly applied to 
analyze large collections of categorical data.
More recently, 
topic models have been extended to 
neural-based models 
\cite{WangLCKLS19,3442381.3449943} 
by using a variational autoencoder
\cite{kingma2013auto}.
A collection of events 
can be turned into a tensor
\cite{schein2015bayesian,hooi2019smf,jang2021fast}.
TriMine \cite{MatsubaraSFIY12} 
summarizes an event tensor and discovers groups of dimensions.
As with 
\cite{araujo2014com2,DBLP:conf/kdd/ShahKZGF15,koutra2014vog,MatsubaraSF16},
the minimum description length (MDL) principle \cite{grunwald2005advances} 
is applied to summarize time series and dynamic graphs.
Unlike these methods,
our work focuses on 
tensor streams.
The processing and clustering of data streams
\cite{10.5555/1315451.1315460,gong2017clustering,hahsler2016clustering,mansalis2018evaluation},
such as DBSTREAM \cite{hahsler2016clustering},
have also attracted significant interest.
However, 
these algorithms process each data point individually 
and cannot capture multi-aspect features.

\myparaitemize{Anomaly Detection}
Typical anomaly detection methods 
\cite{smets2011odd,10.1145/1541880.1541882,FANAEET2016130,akoglu2015graph,manzoor2018xstream}, 
such as local outlier factor (LOF) \cite{10.1145/342009.335388} 
and tree-based approaches \cite{liu2012isolation,10.5555/3045390.3045676}, 
can be used in event tensors by converting multiple attributes to numerical ones.
\cite{jiang2015general,mao2014malspot,shi2015stensr,shin2017densealert,bhatia2021mstream}
use a stream of multi-aspect records as input.
MemStream \cite{DBLP:conf/www/0001JSKH22} 
can learn dynamically changing trends
to handle time-varying data distribution known as {\it concept drift} \cite{lu2018learning,gupta2013outlier,chi2017hashing}.
Although these methods have the ability to detect multiple anomalies,
they cannot identify the types of anomalies
or capture dynamical multi-aspect patterns.

In conclusion, 
none of the existing methods focus specifically 
on modeling of
dynamical multi-aspect patterns,
summarization, and anomaly detection in high-order tensor streams. 
\section{Proposed model}
    \label{030model}
    In this section, we present our proposed model.
\subsection{Design Philosophy of \method}
\label{sec:designphilo}
The symbols used in this paper 
are described
in Appendix A.
Here we consider 
our settings, namely,
complex time-stamped event streams.
We continuously monitor
an event entry with 
$\nmode$ categorical \attributes and a timestamp.
At the most recent time $\duration$,
we have a collection of events with
$\nunits_1 \ldots \nunits_{\nmode}$ unique \units 
for each \attribute and 
$\duration$ timestamps.
\begin{definition}[\Etstream]
\rm{
Let $\tensor \in \mathN^{\nunits_1 \times \cdots \times \nunits_{\nmode} \times \duration}$ be an $\nmode+1$ th-order tensor stream up to the current 
time point $\duration$.
At every time point $\duration$ 
that is arrived at with a non-overlapping time interval $\tau \ll \duration$,
we can obtain the current tensor $\tensorC \in \mathN^{\nunits_1 \times \cdots \times \nunits_{\nmode} \times \tau}$ 
as the partial tensor of $\tensor$.
The element $x_{\lunit_1 \ldots \lunit_{\nmode} ,\ltime}$ of $\tensor$ shows the total number of 
event entries 
of the $\lunit_1$-th \ldots $\lunit_{\nmode}$-th \units 
in each \attribute 
at time tick $\ltime$.
}
\end{definition}
\autoref{fig:preview}~(d) shows the event tensor stream for NYC taxi rides,
where each event is of the form
{\it (pick-up ID, drop-off ID, \timestamp)},
$\nmode=2$.
Here, we provide the reader with three important observations.

\begin{observation}[High Dimensional]
This tensor has a large number of \units in each attribute,
e.g., $\nunits_1=262$ and $\nunits_2=264$ 
in the pick-up/drop-off location attribute.
\end{observation}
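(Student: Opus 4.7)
The plan is to verify this observation empirically by directly inspecting the \ytaxi dataset referenced earlier in the paper, rather than by any mathematical derivation. The statement makes two concrete numerical claims, $\nunits_1 = 262$ unique pick-up location IDs and $\nunits_2 = 264$ unique drop-off location IDs, so the natural approach is to count the distinct categorical values appearing in each of the two location attributes of the event log over the stated window (Jul. 1st, 2019 through Jun. 30th, 2020).

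First I would load the raw event stream and project onto the pick-up attribute, then compute the cardinality of the set of unique identifiers $\nunits_1 = |\{\lunit : \lunit \text{ appears as a pick-up ID in } \tensor\}|$, and analogously $\nunits_2$ for the drop-off attribute. In principle both values should be at most the total number of TLC taxi zones in the official lookup table used by the dataset, and the observation asserts they are large in the sense that the tensor has many slices along each non-temporal mode.

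The main (and essentially only) obstacle is an accounting subtlety rather than a mathematical one: one must decide whether to count (i) every ID in the zone lookup table, (ii) every ID that appears at least once anywhere in $\tensor$, or (iii) every ID active in a specific sub-window. The reported values $262$ and $264$ are both slightly below the $265$ officially defined taxi zones, which strongly suggests convention (ii) is used and that a handful of zones simply have no pick-ups (respectively drop-offs) in the observation window. Confirming this match is enough to discharge the claim.

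Since the statement is framed as an \textbf{Observation} rather than a theorem, no further mathematical argument is needed. Its purpose is expository: it establishes that the raw tensor $\tensor \in \mathN^{\nunits_1 \times \nunits_2 \times \duration}$ has attribute dimensions in the hundreds, which motivates the subsequent design-philosophy observations about sparsity and the semi-infinite nature of the stream, and ultimately the need for the compact regime/\topic representation that \method will introduce.
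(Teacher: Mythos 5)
Your approach is correct and matches the paper's, which offers no formal proof but simply asserts the attribute cardinalities of the \ytaxi dataset as an empirical fact; counting the distinct pick-up and drop-off location IDs appearing in the event log over the stated window is exactly what underlies the claim. Your aside about the counts sitting just below the number of officially defined taxi zones is a sensible sanity check, though the paper itself does not elaborate on that accounting convention.
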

\begin{observation}[Sparse]
In the figure,
most of the attribute 
pairs 
have
very sparse sequences,
which derails all typical time series analysis tools because they look like noise (e.g., $\{0,0,0,1,0,$ $0,1,2,0,\cdots\}$).
\end{observation}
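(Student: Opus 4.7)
The Observation is essentially empirical --- it asserts that in the NYC taxi stream $\tensorC$ most attribute pairs yield near-zero sequences. My plan is therefore to substantiate the sparsity claim by a combinatorial counting argument, and then to translate ``looks like noise'' into a quantitative statement about which classes of time series tools fail on such input. Let $N$ denote the total number of recorded events over the window $[1,\duration]$ and recall $\nunits_1=262$, $\nunits_2=264$. The cube $\tensorC$ has $\nunits_1\nunits_2\duration$ cells, so the overall density is at most $N/(\nunits_1\nunits_2\duration)$. Plugging in $\duration\approx 8760$ hours and $N$ on the order of $10^7$--$10^8$ trips per year already yields an average density well below $10\%$, and once the heavy concentration of volume on a handful of popular pick-up/drop-off combinations is accounted for by a Zipfian tail argument, the per-pair density drops by several further orders of magnitude. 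This gives a quantitative version of ``most pairs have very sparse sequences''.

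Next, to turn sparsity into a noise-like statement, I would model each pair sequence $\{x_{\lunit_1 \lunit_2, \ltime}\}_{\ltime=1}^{\duration}$ as independent $\mathrm{Poisson}(\lambda_{\lunit_1 \lunit_2})$ with rate $\lambda_{\lunit_1 \lunit_2} \ll 1$. Under this null, $\Pr(x>0)\approx \lambda$, the sample autocorrelation and periodogram collapse to those of Bernoulli noise, and sample variance is dominated by shot noise rather than by any underlying signal. This is precisely the regime in which classical tools that assume smooth, real-valued emissions --- ARIMA, linear dynamical systems, HMMs with Gaussian observations, and most spectral methods --- cannot distinguish signal from randomness: the maximum-likelihood fit is flat and the one-step forecast converges to the constant zero predictor.

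The main obstacle is that ``derails all typical time series analysis tools'' is inherently qualitative: a completely formal proof would require fixing a class of methods and a failure criterion (e.g.\ forecast RMSE within an additive constant of the zero baseline), which is outside the scope of an observation intended to motivate the design. The most defensible tightening is therefore operational: exhibit one or two representative models (ARIMA and a shallow LSTM) fitted cell-by-cell to $\tensorC$ and report that their out-of-sample RMSE is statistically indistinguishable from predicting the marginal mean, thereby grounding the qualitative claim in the very dataset shown in \autoref{fig:preview}(d).
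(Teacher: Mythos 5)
The paper offers no proof of this statement: it is framed as an empirical Observation, justified solely by pointing at \autoref{fig:preview}~(d) and noting that the plotted per-pair count sequences visually resemble noise. Your proposal is therefore not wrong so much as it does strictly more than the paper does. The counting argument is sound and consistent with the paper's own numbers ($\nunits_1=262$ and $\nunits_2=264$ give roughly $6.9\times 10^{4}$ location pairs, so even $10^{8}$ annual trips spread over hourly bins leave the vast majority of cells at zero), and it is the natural way to turn ``most attribute pairs have very sparse sequences'' into something checkable. The Poisson null model is an extra assumption the paper never makes, but it is a reasonable formalization of ``looks like noise,'' and you are right that the residual claim about derailing ``all typical time series analysis tools'' is inherently qualitative and can only be grounded operationally, e.g.\ by your proposed ARIMA/LSTM-versus-zero-baseline experiment. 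What the paper's approach buys is economy: the observation exists only to motivate the design requirements in Section~\ref{sec:designphilo}, so a visual appeal to the raw data suffices. What your approach buys is falsifiability. One small correction: \autoref{fig:preview}~(d) displays one week of data, not the full year, so if you instantiate the density computation you should either use $\duration=168$ hours with the corresponding weekly trip count, or state explicitly that you are bounding the density of the full stream $\tensor$ rather than that of the displayed window.
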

\begin{observation}[Semi-infinite]
The \etstream evolves over time and arrives 
in an unbounded stream,
making it impossible to store all the historical data.
\end{observation}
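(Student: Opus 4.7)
The plan is to derive the claim directly from Definition~1. First I would observe that the tensor $\tensor \in \mathN^{\nunits_1 \times \cdots \times \nunits_{\nmode} \times \duration}$ has a final mode whose cardinality equals the current time index $\duration$, and that $\duration$ is advanced by $\tau$ each time a fresh batch $\tensorC$ is appended. Since the monitoring setup in Definition~1 fixes no terminal time, $\duration \to \infty$ as the stream proceeds, and every arriving batch $\tensorC$ is by construction \emph{appended} (not overwritten), so $\tensor$ is strictly growing in its last mode.

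Next I would lower-bound the storage footprint of $\tensor$. Even if we exploit sparsity and keep only nonzero entries, the cumulative number of entries we must store is at least the total number of events logged over $[0,\duration]$. Assuming the long-run mean arrival rate per interval $\tau$ is bounded below by some constant $c>0$ (otherwise there is nothing to stream), the representation size is $\Omega(c \cdot \duration / \tau)$ and therefore diverges with $\duration$. Any physical device has a finite memory budget $M$, so there exists a finite $\duration^{\star}$ at which the cumulative representation of $\tensor$ exceeds $M$; past that point exact retention of the entire history is infeasible, which is precisely the claim of the observation.

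The main obstacle, though modest, is making explicit the implicit regularity assumption that the arrival rate does not collapse to zero, since a pathological all-zero stream could in principle be stored with $O(1)$ overhead. I would therefore state the practical hypothesis that events continue to arrive with a bounded-below long-run rate, which is the regime motivating the paper and the reason summarization (rather than naive archiving) is required in the remainder of the work.
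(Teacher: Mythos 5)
Your argument is sound, but note that the paper offers no proof of this statement at all: it is presented as one of three motivating \emph{observations} about the data setting (alongside High Dimensional and Sparse), justified only by appeal to the empirical character of real event streams such as the NYC taxi log and online shopping transactions arriving ``at high bit rates.'' Your proposal therefore does something the paper does not attempt, namely derive the claim formally from Definition~1: the last mode of $\tensor$ has cardinality $\duration$, batches are appended every interval $\tau$ with no terminal time, so the nonzero-entry count grows without bound under a bounded-below arrival rate, eventually exceeding any fixed memory budget. This is a legitimate and more rigorous route, and you are right to flag the one genuine gap in it --- the observation is false for a degenerate stream with vanishing long-run arrival rate, so the positive-rate hypothesis must be stated explicitly (the paper implicitly assumes it by restricting attention to ``huge'' streams). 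What the paper's informal treatment buys is brevity and a direct tie to the application domains; what your formalization buys is a precise statement of exactly which assumption (persistent arrivals plus finite memory) forces the move from archiving to the summarization framework of Problem~1. Either is acceptable for an observation of this kind, but if you present yours, state the arrival-rate hypothesis as part of the claim rather than as an afterthought.
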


Consequently,
we aim to summarize the \etstream and 
obtain a succinct description.
Specifically,
we focus on the 
two types of patterns, 
\prop{P1} \topics (i.e., latent groups and their relationship)
and \prop{P2} regimes (i.e., distinct time-evolving patterns).
So, what is the simplest mathematical model 
that can capture both \prop{P1} and \prop{P2}? 
How can we formulate the summarization problem?
We provide the answers below.

\subsection{Proposed Solution: \method}
\label{sec:model}
We now present our model in detail.
We first describe
\prop{P1} \topics
in each current tensor $\tensorC$
by introducing
{\it multi-aspect \topic factorization}
and then propose a {\it compact description}
for representing
\prop{P2} regimes and the whole tensor stream.
Finally, 
we formalize the problem
as minimizing {\it encoding cost} in the data compression paradigm.

\subsubsection{Multi-aspect \Topic Factorization (P1)}
\label{section:decomp}
We begin with the simplest case,
where we have only a current tensor $\tensorC$.
Our first step is to describe 
a high-dimensional and sparse tensor $\tensorC$ 
as a compact and interpretable model.
We thus propose a new factorization
to model the generative process of events.
In our model, 
we assume that there are $\ntopic$ major trends/\topics
behind the event collections.
Specifically,
the $\ltopic$-th \topic is characterized
by probability distributions 
in terms of $\nmode$ attributes and time,
which are defined as follows:
\bit
\item
$ \Matt^{(\lmode)}_{\ltopic} \in \mathR^{\nunits_{\lmode}}$:
probability distribution 
over $\nunits_{\lmode}$ \units of the attribute $\lmode$ for
the \topic $\ltopic$.

\item
$ \Mtime_{\ltime} \in \mathR^{\ntopic}$:
probability distribution over $\ntopic$ \topics
for the time $\ltime$.
\eit
Here, 
we refer to
$\Matt^{(1)}, \ldots, \Matt^{(\nmode)}$, and $\Mtime$
as \topic matrices.
Since we treat each attribute as categorical,
the \topic matrices can be described 
by employing a Dirichlet prior \cite{blei2003latent}:
\small
\begin{align}
\Matt_{\ltopic}^{(\lmode)} \sim \textrm{Dirichlet}(\alpha^{(\lmode)}),
    ~
    ~
\Mtime_{\ltime} \sim \textrm{Dirichlet}(\beta),
    \nonumber
\end{align}
\normalsize
where $\alpha^{(m)}$ and $\beta$ are 
the hyperparameters\footnote{
We set $\alpha^{(m)} = \beta = 1/\ntopic$ as default.
}.

We also incorporate temporal dependencies 
into this model so that each \topic matrix 
captures
the context of its predecessors in the data stream.
We assume that
the means of the \topics
are the same as at the previous time $\duration-\tau$,
unless the newly arrived events 
$\tensorC 
$
are confirmed.
With this assumption,
we can use the following Dirichlet priors:
$\textrm{Dirichlet}(\alpha^{(\lmode)}{}_{1}\Patt_{\ltopic}^{(\lmode)})$
and 
$\textrm{Dirichlet}(\beta{}_{1}\Ptime_{\ltime})$, 
where $_{l}\Patt^{(\lmode)}$ and $_{l}\Ptime$ are the previous \topic matrices
at $\duration-l\tau$.
%
To capture the long-term dependencies, 
we can extend this approach so that it can depend on past $L$ matrices.
\small
\begin{align}
\Matt_{\ltopic}^{(\lmode)} &\sim \textrm{Dirichlet}(\Sigma_{l=1}^{L} \alpha^{(\lmode)}{}_{l}\Patt^{(\lmode)}_{\ltopic}),
    ~~
\Mtime_{\ltime} \sim \textrm{Dirichlet}(\Sigma_{l=1}^{L} \beta{}_{l}\Ptime_{\ltime}).
    \nonumber
\end{align}
\normalsize
Consequently, the generative process can be described 
as follows:
\begin{center}
  \fbox{
  \hspace{-2em}
  \begin{minipage}{1\columnwidth}
\small
\begin{itemize}
\setlength{\parskip}{0cm}\setlength{\itemsep}{0.1cm}
\item 
  For each \topic $\ltopic=1, \ldots, \ntopic$: 
  \begin{itemize}
  \setlength{\parskip}{0cm}\setlength{\itemsep}{0.1cm}
  \renewcommand{\labelenumi}{(\alph{enumi})}
    \item
    For each attribute $\lmode=1, \ldots, \nmode$: 
    \begin{itemize}
    \setlength{\parskip}{0cm}\setlength{\itemsep}{0.1cm}
    \renewcommand{\labelenumi}{(\alph{enumi})}
      \item 
       $ \Matt^{(\lmode)}_{\ltopic} \sim \textrm{Dirichlet}(\Sigma_{l=1}^{L} \alpha^{(\lmode)}
       {}_{l}\Patt^{(\lmode)}_{\ltopic})$
    \end{itemize}
  \end{itemize}
\item
  For each time $\ltime=1, \ldots, \tau$:
  \begin{itemize}
  \setlength{\parskip}{0cm}\setlength{\itemsep}{0.1cm}
  \renewcommand{\labelenumi}{(\alph{enumi})}
    \item
      $\Mtime_{\ltime} 
      \sim 
      \textrm{Dirichlet}(\Sigma_{l=1}^{L} \beta {}_{l}\Ptime_{\ltime})$
    \item
      For each entry $j=1, \ldots, N_{\ltime}$: 
      \begin{itemize}
      \setlength{\parskip}{0cm}\setlength{\itemsep}{0.1cm}
      \renewcommand{\labelenumi}{(\alph{enumi})}
        \item
        $z_{\ltime,j} \sim \textrm{Multinomial}(\Mtime_{\ltime})$
        // Draw a latent \topic $z_{\ltime,j}$ 
        \item 
        For each \attribute $\lmode=1, \ldots, \nmode$: 
        \begin{itemize}
        \setlength{\parskip}{0cm}\setlength{\itemsep}{0.1cm}
        \renewcommand{\labelenumi}{(\alph{enumi})}
          \item
          $e^{(\lmode)}_{\ltime,j} \sim \textrm{Multinomial}(\Matt^{(\lmode)}_{z_{\ltime,j}})$,
          // Draw a \unit in each \attribute
        \end{itemize}
      \end{itemize}
    \end{itemize}
\end{itemize}
\normalsize
\end{minipage}
}
\end{center}
where 
$N_{\ltime}$ is the total number of events at time $t$, 
and $z_{\ltime,j}$ is the latent \topic .
Each event $e_{\ltime,j}$ is sampled from the \topic -specific multinomials.
We note that the benefits of this model are three-fold.
{\it First}, even though the event tensor has sparse activity,
our model can discard many redundancies (e.g., noise) and 
summarize a set of events into $\ntopic$ \topics.
{\it Second}, 
an event entry 
is generated from
$\nmode+1$ \topic matrices.
It thus handles arbitrary-order tensors.
{\it Third},
to capture temporal dependencies, 
it employs past $L$ \topic matrices rather than storing tensors.

\subsubsection{Compact Description (P2)}
Although the \topic matrices 
concisely describe
the partial tensor $\tensorC$,
it is insufficient 
for 
the whole tensor stream $\tensor$,
containing various types of distinct
dynamical patterns.
We thus introduce another higher-level architecture.

\begin{definition}[Regime: $\regime$]
\rm{
Let $\regime$ be a regime consisting of the \topic matrices:
$\regime = \{ \{\Matt^{(\lmode)}\}_{\lmode=1}^{\nmode}, \Mtime \}$
to describe a certain distinct dynamical pattern
with which we can divide and summarize the entire tensor stream into segments.
When there are $\nregime$ regimes, 
a regime set is defined as $\regimeset =\{\regime_\lregime\}_{\lregime=1}^{\nregime}$.
}
\end{definition}

Also, when there are $\nshiftp$ switching positions, 
the regime assignments are defined as $\regimeassignment =\{\regimeassign_\lshiftp\}_{\lshiftp=1}^{\nshiftp}$,
where $\regimeassign_\lshiftp =(t_s,\lregime)$ is the history of each switching position $t_s$ to the $\lregime$-th regime.
Finally, 
we adopt all the above parts for 
a compact description of $\tensor$.
\begin{definition}[Compact description]
\rm{
Let $\cand = \{\nregime, \Theta, \nshiftp, \regimeassignment \}$ 
be 
a compact representation of the whole tensor stream $\tensor$,
    namely,
    \bit  
    \item
    the number of regimes $\nregime$ and the regime set, 
    $ \regimeset =\{\regime_\lregime\}_{\lregime=1}^{\nregime}$, 
    \item
    the number of segments $\nshiftp$ and the assignments, 
    $ \regimeassignment =\{\regimeassign_\lshiftp\}_{\lshiftp=1}^{\nshiftp}$.
    \eit
}
\end{definition}




\subsubsection{Problem Formulation}
Our final goal is to formulate the problem, 
where we summarize all data streams $\tensor$  
into a compact representation $\cand$.
Our objective function leverages 
the minimum description length (MDL) principle \cite{grunwald2005advances}.
In short, 
it follows the assumption that
the more we can compress the data, the more we can learn about their underlying patterns.
Specifically,
we evaluate the total encoding cost,
which can be used to losslessly compress the original tensor stream $\tensor$.
The summarization problem is written as follows:
\begin{problem} 
  \label{comp_problem}
    \textbf{Given}
    a whole event stream $\tensor$,
    \textbf{find}
    the compact description $\cand$, which minimizes the total encoding cost
    \small
    \begin{align}
    \label{eqn:cost:total}
    \costT{\tensor}{\cand} &= \costM{\cand} + \costC{\tensor}{\cand},
    \end{align}
    \normalsize
    where  $\costM{\cand}$ is the \textit{model coding cost} of $\cand$,
    and $\costC{\tensor}{\cand}$ is 
    the \textit{data coding cost} given the model $\cand$.
\end{problem}
\hide{
Given a whole event stream $\tensor$ and compact description $\cand$,
the total encoding cost is described as follows:
\small
\begin{align}
    \label{eqn:cost:total}
    \costT{\tensor}{\cand} &= \costM{\cand} + \costC{\tensor}{\cand},
\end{align}
\normalsize
where  $\costM{\cand}$ is the \textit{model coding cost} of $\cand$,
and $\costC{\tensor}{\cand}$is the \textit{data coding cost} given the model $\cand$.
In the following, we define the two parts of the total cost more concretely.
\\
}
\myparaitemize{Model Coding Cost}
The model coding cost is the number of bits needed to describe the model.
In our model,
the dimensionality of latent components requires 
$\costM{d}=\Sigma_{\lmode=1}^\nmode \log^{*}(\nunits_{\lmode}) 
+ \log^{*}(\tau) + \log^{*}(\ntopic)$
\footnote{Here, log$^*$ is the universal code length for integers.}.
The number of regimes needs $\costM{\nregime} = \log^{*}(\nregime)$.
The model coding cost of each regime $\regime$ consists of the following terms,
\small
\begin{align}
    \label{eqn:cost:regime}
    &\costM{\regime} = 
    \sum
    _{\lmode=1}^{\nmode} \costM{\Matt^{(\lmode)}} + \costM{\Mtime} , \\
    &\costM{\Matt^{(\lmode)}} 
    = |\Matt^{(\lmode)}| \cdot(\log(\ntopic) + \log(\nunits_{\lmode}-1) + \cF) + \log^*(|\Matt^{(\lmode)}|), \\
    &\costM{\Mtime} = |\Mtime| \cdot (\log(\tau) + \log(\ntopic-1) + \cF) + \log^*(|\Mtime|), 
\end{align}
\normalsize
where $|\cdot|$ describes the number of non-zero elements in each of the matrices,
and $\cF$ is the 
floating point
cost\footnote{
We set 
$8$
bits as the default by following \cite{MatsubaraSF15,takahashi2017autocyclone}.}.
The number of segments needs $\costM{\nshiftp} = \log^{*}(\nshiftp)$.
Each shifting point needs 
$\costM{\regimeassign_\lshiftp}=\log^{*}(t_s)+\log(\nregime)$.

\myparaitemize{Data Coding Cost}
Given a full regime set $\regimeset$, 
we can encode the data $\tensor$ based on Huffman coding \cite{bohm2007ric},
i.e, a number of bits are assigned to each value in $\tensor$. 
The data coding cost of $\tensor$ given $\theta$ is computed by:
$\costC{\tensor}{\theta}= -\log P(\tensor|\{\Matt^{(m)}\}_{\lmode=1}^{\nmode},\Mtime)$.
Thus, the data coding cost of $\tensor$ given $\cand$ is computed by:
\small
\begin{align}
    \label{eqn:cost:code}
    \costC{\tensor}{\cand}
        &= \sum_{\lregime=1}^{\nregime}  - \log P(\tensor[\lregime]|\theta_\lregime),
\end{align}
\normalsize
where
$\tensor[\lregime]$ is a set of partial tensors assigned by the $\lregime$-th regime.
Finally,
the total encoding cost $\costT{\tensor}{\cand}$ is written as follows:
\begin{center}
\fbox{
\begin{minipage}{0.95\columnwidth}
\small
\begin{align}
    \label{eqn:cost:total_cost}
    \costT{\tensor}{\cand}
    \nonumber
    &= \costM{\cand} + \costC{\tensor}{\cand}\\
    \nonumber
    &= \costM{d}+\costM{\nregime} +\costM{\nshiftp}\\
    &+ \sum
    _{\lregime=1}^{\nregime}\costM{\regime}
    + \sum
    _{\lshiftp=1}^{\nshiftp}\costM{\regimeassign_\lshiftp}
    + \costC{\tensor}{\cand}.
  \end{align}    
\normalsize
\end{minipage}
}
\end{center}

\section{Streaming Algorithm}
    \label{040proposed}
    Thus far, 
we have described 
how 
we 
represent the two concepts,
(i.e., \topics and regimes),
and formulate the summarization problem 
(i.e., Problem~\ref{comp_problem})
in a lossless compression context.
Our next goal is to solve the problem
in a streaming setting.
In this section,
we aim to figure out {\it how to incrementally summarize}
entire event streams into a compact description $\cand$ 
and also 
{\it how to exploit} the compact description 
for streaming 
anomaly detection.

To tackle these problems, 
we now present a streaming algorithm \methodalgo,
consisting of two sub-algorithms, 
\decomposition and \compression .
Algorithm ~\ref{alg:main} 
(see Appendix~\ref{sec:app:algo})
shows the overall procedure, and 
Figure~\ref{fig:overview} illustrates
how the proposed algorithm works.
Intuitively, 
our algorithm continuously generates a regime $\regime_c$ from 
the non-overlapping arrival tensor $\tensorC$.
It then updates the compact description $\cand$ 
with $\regime_c$
and 
measures
the anomalousness of $\tensorC$.
Next, we describe each sub-algorithm in detail.

\subsection{\decomposition}
We first aim to incrementally monitor $\tensorC$ and 
estimate a candidate regime $\regime_c$ 
(i.e., $\{\Matt^{(\lmode)}\}_{\lmode=1}^{\nmode}$, $\Mtime$),
which best describes $\tensorC$.
According to
the generative process in section~\ref{section:decomp},
we 
propose an efficient estimation 
with 
collapsed Gibbs sampling \cite{porteous2008fast}.
Specifically,
for each non-zero entry 
$\Ex_{\lunit_1,\ldots,\lunit_{\nmode},\ltime}$ 
in $\tensorC$,
we draw latent \topics 
$z_{\lunit_1,\ldots,\lunit_{\nmode},\ltime}$ 
with the probability $p$:
\small
\begin{align}
\label{eqn:gibbs_sampling_time}
  &p(z_{\lunit_1,\ldots,\lunit_{\nmode},\ltime} = 
\nonumber  
\ltopic ~|~ \tensorC, \Mtime', \Ptime, \beta, \{\Matt^{(\lmode)'}, \Patt^{(m)} , \alpha^{(\lmode)}\}_{\lmode=1}^{\nmode} ) \\ 
    &\propto
    \frac{\Etime'_{\ltime,\ltopic}
    +\sum_{l=1}^L\beta{}_{l}\ptime_{\ltime,\ltopic}}
    {\sum_{\ltopic=1}^{\ntopic} \Etime'_{\ltime,\ltopic} 
    + L\beta }
    \cdot
    \prod_{\lmode=1}^{\nmode}
    \frac{\Eatt^{(\lmode)'}_{\ltopic,\lunit_\lmode}
    +\sum_{l=1}^L\alpha^{(\lmode)}{}_{l}\patt^{(\lmode)}_{\ltopic,\lunit_\lmode}}
    {\sum_{\lunit=1}^{\nunits_\lmode}\Eatt^{(\lmode)'}_{\ltopic,\lunit_\lmode} 
    + L\alpha^{(\lmode)}},
\end{align}
\normalsize
where $\Eatt_{\ltopic,\lunit_{\lmode}}^{(\lmode)}$ and $\Etime_{\ltime,\ltopic}$ are the total counts
of that \topic $\ltopic$ is assigned to the $\lunit_{\lmode}$-th \unit
and time-tick $\ltime$, respectively.
Note that the prime 
(e.g., $\Etime'_{\ltime,\ltopic}$) 
indicates the
count
yielded by excluding the entry
$\Ex_{\lunit_1,\ldots,\lunit_{\nmode},\ltime}$.
After the sampler has burned-in, we can obtain
the component matrices for $\tensorC$, i.e.,
$\{\tilde{\Matt}^{(\lmode)}\}_{\lmode=1}^{\nmode}$ and $\tilde{\Mtime}$, as follows: 
\small
\begin{align}
    \label{eqn:normalizedMatrices} 
    \tilde{\Matt}_{\ltopic,\lunit_\lmode}^{(\lmode)} \propto
    \frac{\Eatt_{\ltopic,\lunit_\lmode}^{(\lmode)}
    +\sum_{l=1}^{L}\alpha^{(\lmode)}{}_{l}\patt_{\ltopic,\lunit_\lmode}^{(\lmode)}}
    {\sum_{\lunit=1}^{\nunits_{\lmode}} \Eatt_{\ltopic,\lunit_\lmode}^{(\lmode)} 
    + L\alpha^{(\lmode)}},
    ~
    \tilde{\Mtime}_{\ltime,\ltopic} \propto
    \frac{\Etime_{\ltime,\ltopic}+\sum_{l=1}^{L}\beta{}_{l}\ptime_{\ltime,\ltopic}}{\sum_{\ltopic=1}^{\ntopic} \Etime_{\ltime,\ltopic} + L\beta}.
\end{align}
\normalsize

Algorithm~\ref{alg:decomp} 
(see Appendix~\ref{sec:app:algo})
shows \decomposition in detail.
It first assigns the latent \topic
$z_{\lunit_1,\ldots,\lunit_{\nmode},\ltime}$ 
for each entry
$\Ex_{\lunit_1,\ldots,\lunit_{\nmode},\ltime}$
in $\tensorC$ 
with Equation ~(\ref{eqn:gibbs_sampling_time}).
Once the latent \topics are determined, 
we can compute the objective matrices simply with Equation ~(\ref{eqn:normalizedMatrices}).
Here, we keep past $L$ \topic matrices in the past parameter set,
i.e., a FIFO queue $Q$ with size $L$.
After computing the current \topic matrices,
the oldest matrices are removed,
and 
the estimated matrices are inserted into $Q$.

Even though 
our tensor data is
high-dimensional
(Observation~$1$),
\decomposition
does not depend on 
dimensionality,
i.e., it takes $O(N)$ time, 
where $N$ is the number of events (see Lemma~\ref{lemma:time} for details).
In contrast,
conventional tensor 
algorithms such as
alternating least squares
(ALS)
\cite{kolda2009tensor}
scale with respect to all the attributes,
i.e., take $O(\prod_{\lmode=1}^{\nmode}\nunits_\lmode)$ time,
which
may become very computationally demanding 
for high-order tensors ($M \geq 4$).

\TSK{
\begin{figure}[t]
    \centering
    \includegraphics[width=\columnwidth]{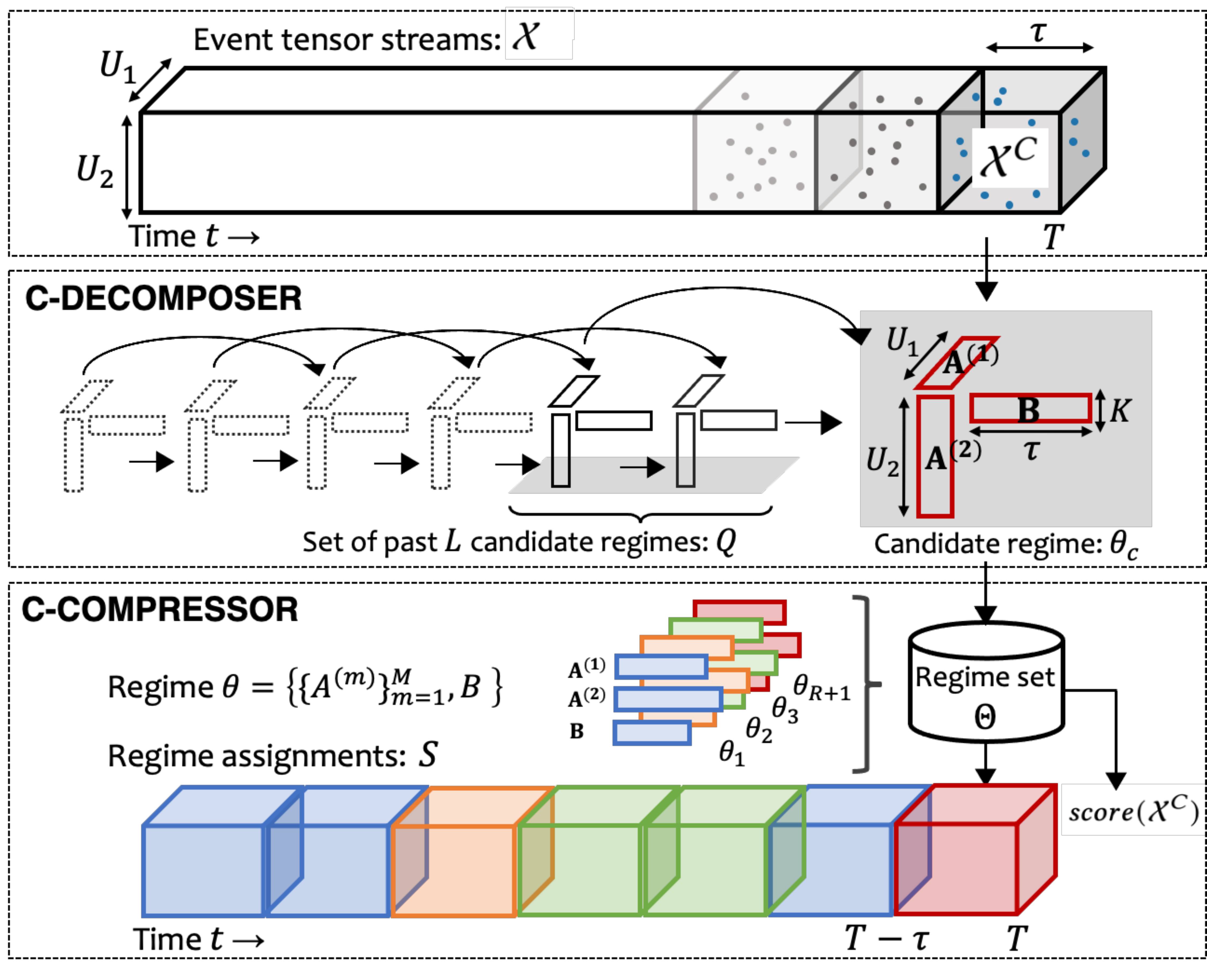}
    \vspace{-2.8em}
    \caption{
        Illustration of \method algorithm.
        \decomposition:
        Given a current tensor $\tensorC \in \mathN^{\nunits_1 \times \nunits_2 \times \tau}$,
        it first decomposes $\tensorC$ into a candidate regime $\regime_c$ 
        depending on past $L$ regimes ($L=2$ in this figure).
        \compression: it assigns the optimal regime for $\tensorC$ 
        among the candidate regime $\regime_c$ and the regimes $\{\regime_\lregime\}_{\lregime=1}^{\nregime}(= \regimeset)$.
        If the candidate regime is assigned,
        \method inserts it into the regime set $\regimeset$ as $\regime_{\nregime+1}$.
        It also computes the anomalousness score for $\tensorC$.
    }
    \label{fig:overview}
    \vspace{0.5em}
\end{figure}
}

\subsection{\compression}
We next describe \compression in steps.
Algorithm~\ref{alg:compression} 
(see Appendix~\ref{sec:app:algo})
shows the overall procedure.
After obtaining the candidate regime $\regime_c$,
our next goal is 
to find compact description $\cand$ 
for the whole tensor stream $\tensor$.
However, $\tensor$ is semi-infinite (Observation~3),
and we thus
cannot process all historical data.
To efficiently find the compact description $\cand$,
we adopt an insertion-based algorithm,
where 
it maintains a reasonable description for $\tensor$ 
and generates a regime if necessary.
Specifically,
the algorithm tracks only two regimes,
the previous regime $\regime_p$ and the candidate regime $\regime_c$.
Given a current tensor $\tensorC$ with the two regimes,
the algorithm compares the extra cost
according to Equation~(\ref{eqn:cost:total_cost})
for each regime,
and then chooses the next procedure so that the additional cost is minimized:
\begin{itemize}
    \item 
    If \compression uses $\regime_p$,
    it stays in the previous regime.
    
    \item
    If $\regime_c$ is chosen,
    \compression
    first finds a more suitable regime in $\regimeset$   
    to avoid duplication.
    Then, the least expensive regime is adopted.
\end{itemize}
Here, the additional cost $\costT{\tensorC}{\regime_{*}}$ is written as follows:
\begin{center}
\fbox{
\begin{minipage}{0.95\columnwidth}
\small
\begin{align}
    \costT{\tensorC}{\regime_{*}} 
    &= \Delta\costM{\cand} + \costC{\tensorC}{\regime_{*}},\\
    \label{eqn:cost:delta_cost_model}
    \nonumber
    \Delta \costM{\cand}
    \nonumber
    &= \log^{*}(\nregime+1) - \log^{*}(\nregime)~ + \costM{\regime_*} \\
    &+ \log^{*}(\nshiftp+1) - \log^{*}(\nshiftp)~ + \costM{\regimeassign},
  \end{align}    
\normalsize
\end{minipage}
}
\end{center}
where $\regime_*$ indicates any regime.
If 
we need 
to shift another existing regime to represent $\tensorC$,
then $\Delta\costM{\cand} = \log^{*}(\nshiftp+1) - \log^{*}(\nshiftp) + \costM{\regimeassign}$;
if the description of $\tensorC$ requires new regimes, 
it costs all of the terms in \autoref{eqn:cost:delta_cost_model};
otherwise, $\Delta\costM{\cand} = 0$.

\myparaitemize{Online Regime Updates}
Whenever an existing regime is selected,
each count in the existing regime is updated by adding $\regime_c$:
\small
\begin{align}
    \label{eqn:model_update}
    \tilde{{\Matt}}_{\ltopic,\lunit_\lmode}^{(\lmode)}&
    \leftarrow
     \frac
     {\Eatt_{\ltopic,\lunit_\lmode}^{(\lmode)}
     +\sum_{l=1}^{L}\alpha^{(\lmode)}{}_{l}\patt_{\ltopic,\lunit_\lmode}^{(\lmode)} 
     + {}_{c}\Eatt_{\ltopic,\lunit_\lmode}^{(\lmode)}}
     {\sum_{\lunit=1}^{\nunits_{\lmode}} \Eatt_{\ltopic,\lunit_\lmode}^{(\lmode)}
     +L\alpha^{(\lmode)}
     +\sum_{\lunit=1}^{\nunits_{\lmode}}{}_{c}\Eatt_{\ltopic,\lunit_\lmode}^{(\lmode)}},
\end{align}
\normalsize
where $c$ (e.g., $_{c}\Eatt_{\ltopic,\lunit_\lmode}^{(\lmode)}$)
is a count in the candidate regime.
$\tilde{{\Mtime}}_{\ltime,\ltopic}$ is analogous, 
and omitted for brevity. 
The effect of the candidate regime is decayed as the existing regime is updated;
in other words, each regime converges as it updates.

\myparaitemize{Anomaly Detection}
Finally, we exploit the compact description $\cand$ for anomaly detection.
Compression-based techniques are 
naturally suited for anomaly and rare instance detection.
In a given compact description $\cand$,
the high usage regime 
compresses the majority of all past data 
with a short code length.
In other words, 
it represents the {\it norm} in the data stream
and thus it needs long code length against rare instances.
We thus consider the encoding cost of $\tensorC$ 
as its anomalousness score;
the higher the compression cost, the more likely it is 
``to arouse suspicion that 
it was generated by a different mechanism''
\cite{hawkins1980identification}.
\small
\begin{align}
    \nonumber
    &\norm =~ \argmax_{\lregime \in
    \nregime}|\segmentset^{-1}_{\lregime}|,\\
    &\score{\tensorC} =~ \costC{\tensorC}{\regime_{\norm}},
    \label{eqn:anomaly_score}
\end{align}
\normalsize
where $|\segmentset^{-1}_{\lregime}|$ is
the total segment length of the regime $\regime_\lregime$.
Note that, in data streams, 
the concept of normal changes over time,
and this is 
known as {\it concept drift} \cite{DBLP:conf/www/0001JSKH22}.
This approach
can adaptively change the 
norm
to judge incoming tensors
as the concept drift.

\begin{lemma}[Time complexity of \method]
    The time complexity of \methodalgo is at least $O(\ntotal)$ and at most $O(\ntotal + \nregime)$ per process,
    where
    $\nregime$ is the number of regimes and  
    $\ntotal$ is the total number of event entries in $\tensorC$ 
    (i.e., $\sum_{\lunit_1} \cdots \sum_{\lunit_{\nmode}} \sum_{\ltime} \Ex_{\lunit_{1},\ldots,\lunit_{\nmode},\ltime}$).
    \label{lemma:time}
    \end{lemma}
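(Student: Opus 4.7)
The plan is to bound the per-process running time of \methodalgo by decomposing it into the costs of its two sub-routines, \decomposition and \compression, and then argue that the Gibbs-sampling step dominates in the easy case while a linear scan over regimes adds an extra $\nregime$ term in the worst case.

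First I would analyze \decomposition. The collapsed Gibbs sampler visits each of the $\ntotal$ non-zero event entries in $\tensorC$ once per sweep, and for each entry it computes the sampling distribution of \autoref{eqn:gibbs_sampling_time} over $\ntopic$ topics across $\nmode$ attributes using the past $L$ regimes cached in the FIFO queue $Q$. Since $\ntopic$, $\nmode$, $L$ are parameters independent of the stream, each update is $O(1)$ with respect to the data, so the full sweep costs $O(\ntotal)$. The normalization in \autoref{eqn:normalizedMatrices} only touches the topic--unit pairs actually observed, so it is bounded by the same $O(\ntotal)$.

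Next I would analyze \compression. For the current tensor $\tensorC$, the algorithm must evaluate two candidates: the previous regime $\regime_p$ and the freshly produced $\regime_c$. Each evaluation of the data coding cost $\costC{\tensorC}{\regime_*}$ is a single pass over the $\ntotal$ entries under the corresponding topic distributions, hence $O(\ntotal)$. If $\regime_p$ wins, the procedure terminates; this is the minimum-cost scenario and gives the lower bound $O(\ntotal)$. If instead $\regime_c$ wins, the algorithm traverses the regime set $\regimeset$ of size $\nregime$ to avoid creating a duplicate, and, given that both $\regime_c$ and each $\regime_\lregime$ are already summarized by constant-size statistics (topic matrices whose dimensions do not depend on $\ntotal$), each such comparison can be performed in $O(1)$. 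Accumulating these $\nregime$ constant-time comparisons gives the extra additive $O(\nregime)$ term. Finally, the online update in \autoref{eqn:model_update} touches only the non-zero topic--unit counts that were already computed during \decomposition, adding no asymptotic cost.

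Combining the two sub-routines, \methodalgo runs in $\Omega(\ntotal)$ in the best case (stay in $\regime_p$) and in $O(\ntotal + \nregime)$ in the worst case (adopt $\regime_c$ and compare against every existing regime), which matches the claimed bounds. The main obstacle is the second step: I need to argue carefully that the candidate-to-existing regime comparisons really are $O(1)$ each rather than $O(\ntotal)$ each; that is, the comparison must be performed on the compact topic-matrix statistics (which have size independent of $\ntotal$) rather than by rescoring $\tensorC$ against every regime. Once that bookkeeping is made explicit, the additive $\nregime$ term follows immediately and the two-sided bound is established.
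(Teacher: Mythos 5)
Your decomposition into \decomposition\ ($O(\ntotal)$ for the Gibbs sweep, treating $\ntopic$, $\nmode$, $L$, and the number of sampling iterations as constants) plus \compression\ ($O(1)$ when staying in the previous regime, an additive $O(\nregime)$ when searching the regime set) is exactly the argument the paper gives, and it reaches the same two-sided bound. The one ``obstacle'' you flag---that the regime-set scan only contributes an additive $O(\nregime)$ if each candidate-versus-existing-regime comparison avoids rescoring all $\ntotal$ entries of $\tensorC$---is a real subtlety, but the paper does not address it either (it simply asserts that finding the optimal regime in $\regimeset$ requires $O(\nregime)$ time), so your version is, if anything, more explicit about the assumption underlying the claimed bound.
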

    \begin{proof}
    Please see Appendix~\ref{sec:app:algo}.
    \end{proof}
\vspace{-1em}
\section{Experiments}
    \label{050experiments}
    \TSK{
\begin{table}[t]
    \footnotesize
    \vspace{-1.2em}
    \caption{Dataset description}
    \label{table:datasets}
    \vspace{-1.5em}
    \begin{tabular}{l|c|c}
        \toprule        
        Dataset  & The form of entry  & Order  \\
        \midrule
        \multicolumn{3}{l}{
        Local Mobility: Ride information \attributes \& timestamp $\rightarrow$ \#rides 
        }\\
        \midrule
        \#1 \ytaxi \cite{nytaxi}
        &  \textit{(Pick-up/Drop-off location ID, Time)} & $3$ \\
        \#2 \nybike \cite{nybike}
        &  \textit{(User's age, Start/End station ID, Time)} & $4$ \\
        \midrule
        \multicolumn{3}{l}{
        E-commerce: Purchase information \attributes \& timestamp $\rightarrow$ \#purchases 
        }\\
        \midrule
        \#3 \jewelry \cite{juwelry}
        &  \textit{(Price, Brand, Gem, Accessory type, Time)} & $4$ \\
        \#4 \electronics \cite{electronics}
        &  \textit{(Brand, Item category, Time)} & $3$  \\
        \midrule
        \multicolumn{3}{l}{
        Network traffic/intrusion: Access detail \attributes \& timestamp $\rightarrow$ \#accesses 
        }\\
        \midrule
        \#5 \kddcup \cite{kddcup}
        &  \textit{(Protocol type, Service, Flag, Land, Duration}
        & $10$  \\
        &\textit{Src/Dst bytes, Wrong fragment, Urgent, Time)} &\\
        \#6 \ciexternal \cite{cidds001}
        &  \textit{(Proto, Src/Dst IP Addr, 
        Src/Dst Pt,}
         & $10$  \\
        &\textit{Flags,Duration,Packets,Bytes, Time)}&\\
        \#7 \ciinternal \cite{cidds001}
        & ''
        & $10$  \\        
        \#8 \kyoto \cite{kyoto}
        &  \textit{(Src/Dst bytes, Count,
        Same srv/Serror/Srv serror rate,}
        & $15$  \\
        &\textit{
        Dst host serror rate/same src port rate/srv serrors rate,} &\\
        &\textit{
        Dst host count/srv count, 
        Duration,Service,Flag,Time)} &\\ 
        \bottomrule
    \end{tabular}
    \normalsize
\end{table}

}

\TSK{
\begin{figure*}[!t]
        \vspace{-3em}
    \begin{tabular}{cc}
        \begin{minipage}{0.65\columnwidth}
            \vspace{2em}
            \centering
            \includegraphics[width=\columnwidth]{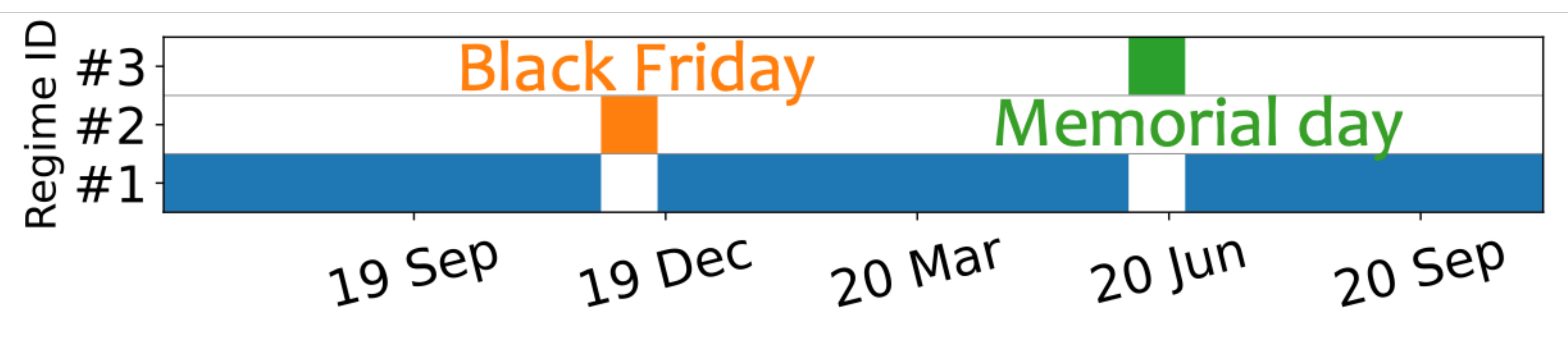}\\
            \vspace{-0.8em}
            (a) Regime identification \\
            \centering
            \includegraphics[width=\columnwidth]{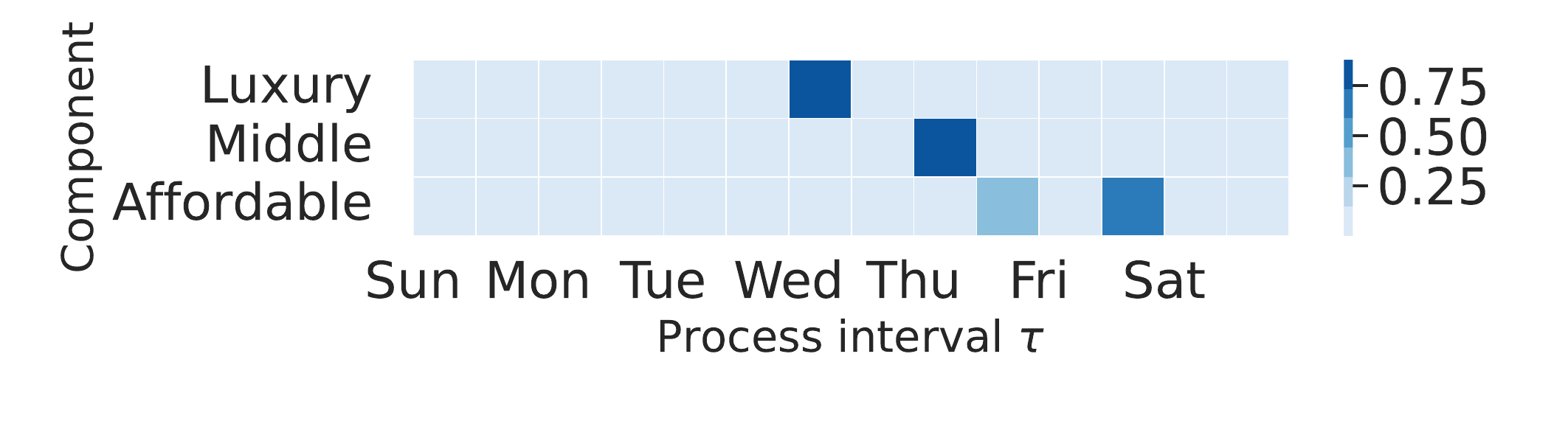}\\
            \vspace{-1em}
            (b-i) Regime\#1 (normal sale) \\ 
            \centering
            \includegraphics[width=\columnwidth]{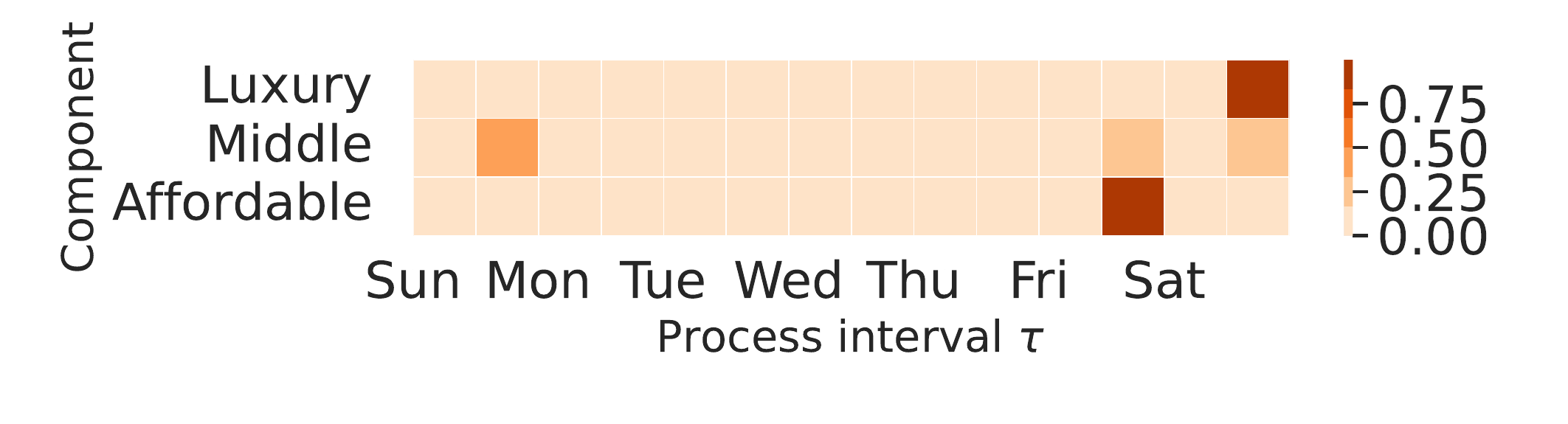}\\
            \vspace{-1em}
            (b-ii) Regime\#2 (Black Friday sale) \\
            \centering
            \includegraphics[width=\columnwidth]{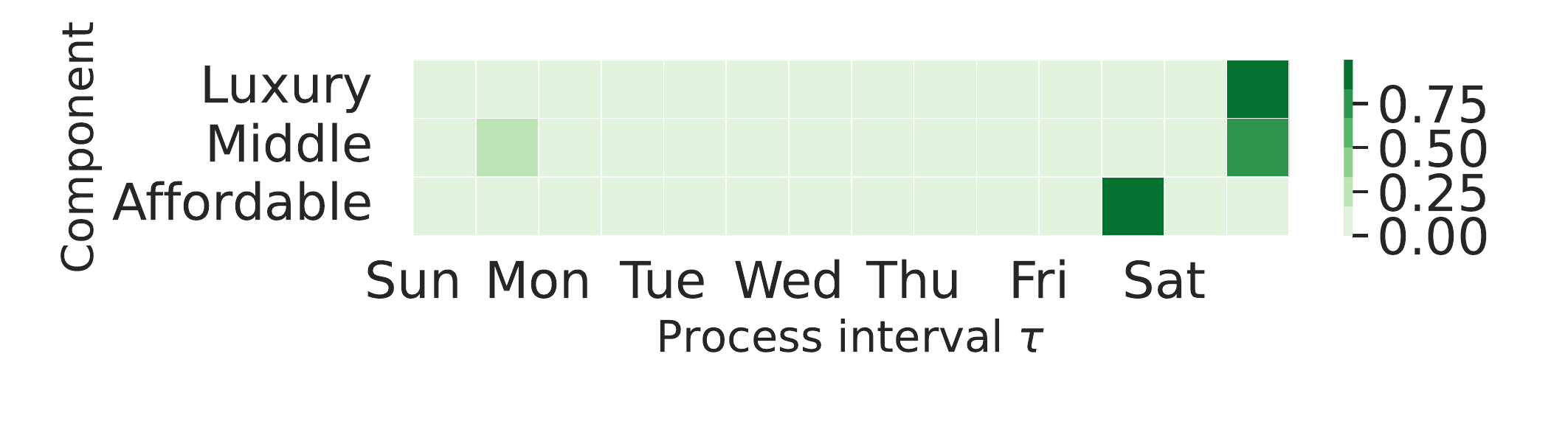}\\
            \vspace{-1em}
            (b-iii) Regime\#3 (Memorial Day sale) \\
        \end{minipage}
        &
        \hspace{1.2em}
        \vspace{-0.5em}
    \centering
    \renewcommand{\arraystretch}{-0.5}
    \hspace{-3em}
    \begin{tabular}{cccc}
    \includegraphics[height=60pt]{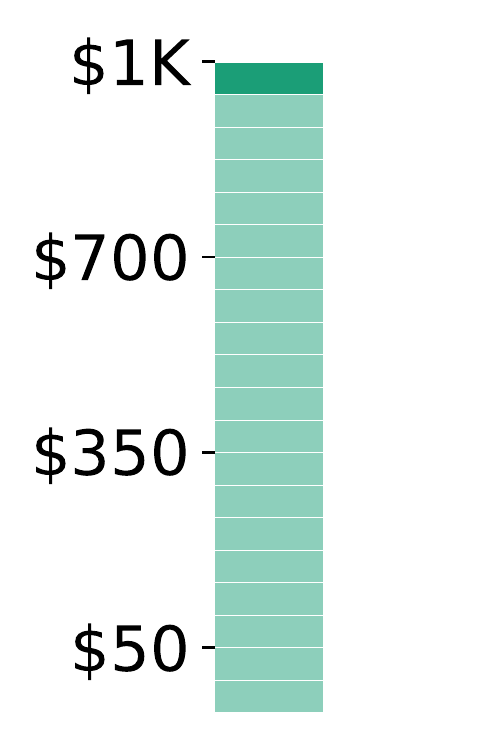} &
    \hspace{-30pt}    
    \includegraphics[width=0.37\columnwidth]{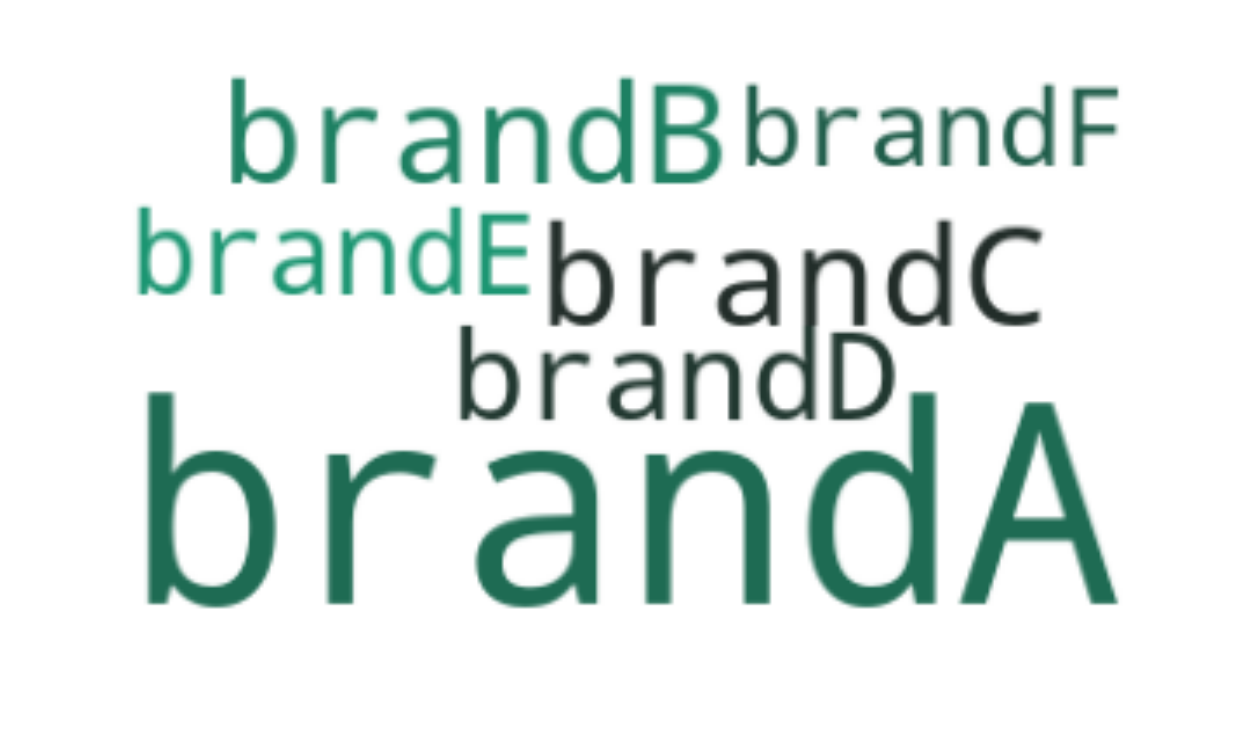} 
    &
    \hspace{-20pt}
    \includegraphics[width=0.37\columnwidth]{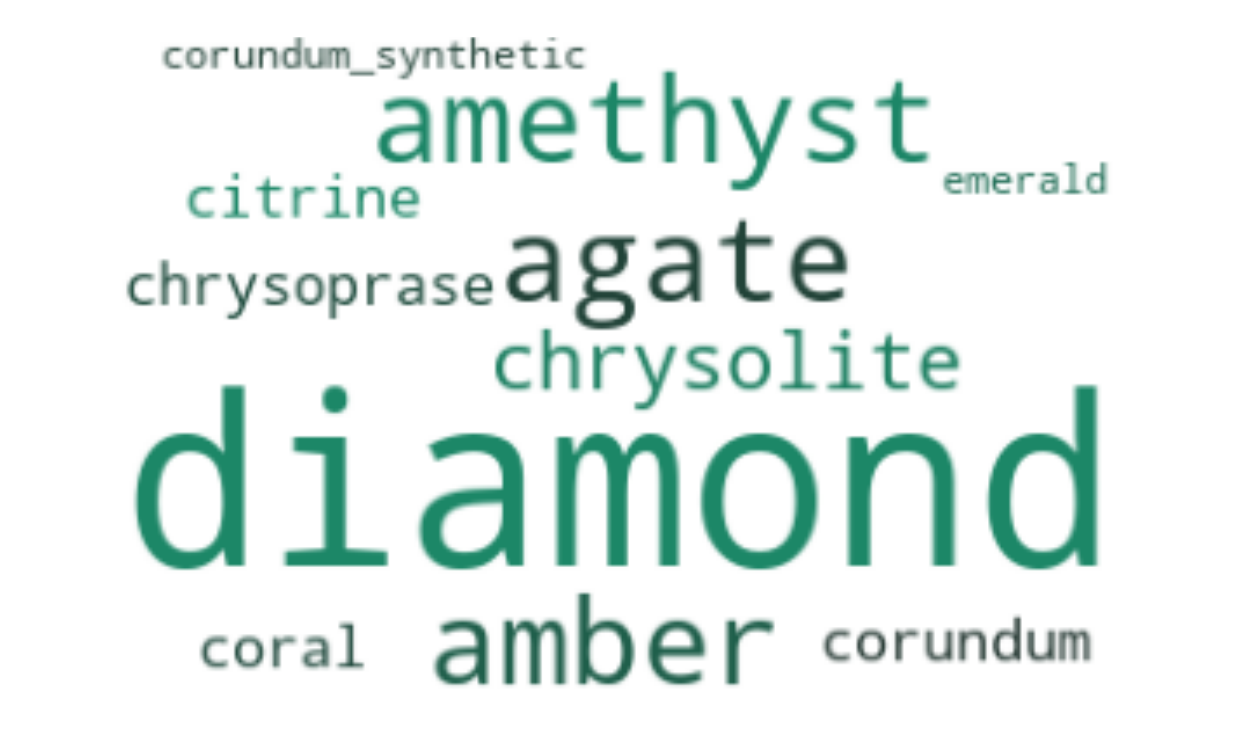}
    &  
    \hspace{-20pt}
    \includegraphics[width=0.37\columnwidth]{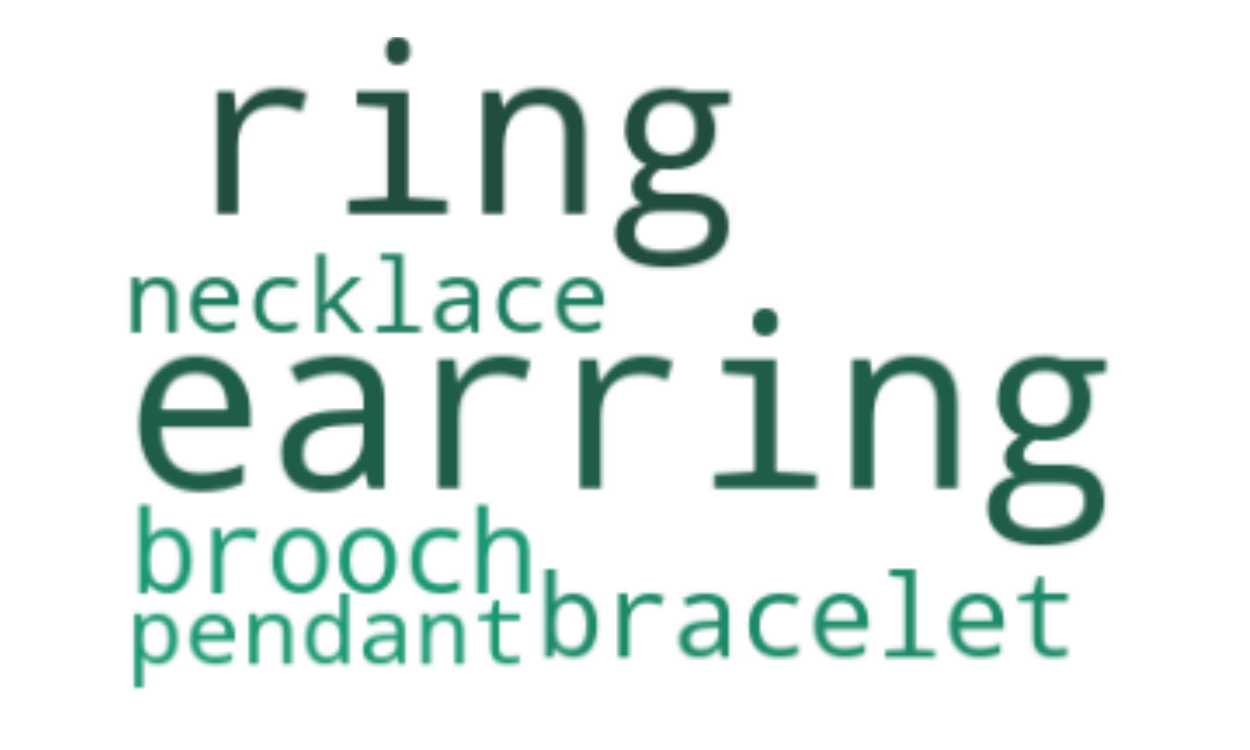}
    \vspace{-0.3em}
    \\
    & & 
    \hspace{-6em} 
    \vspace{-0.5em}
    (c-i) Luxury component & \\
    \includegraphics[height=60pt]{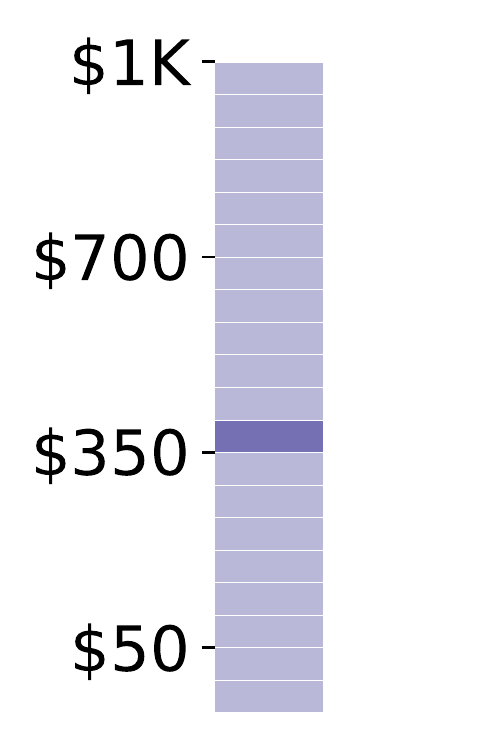} &
    \hspace{-30pt}    
    \includegraphics[width=0.37\columnwidth]{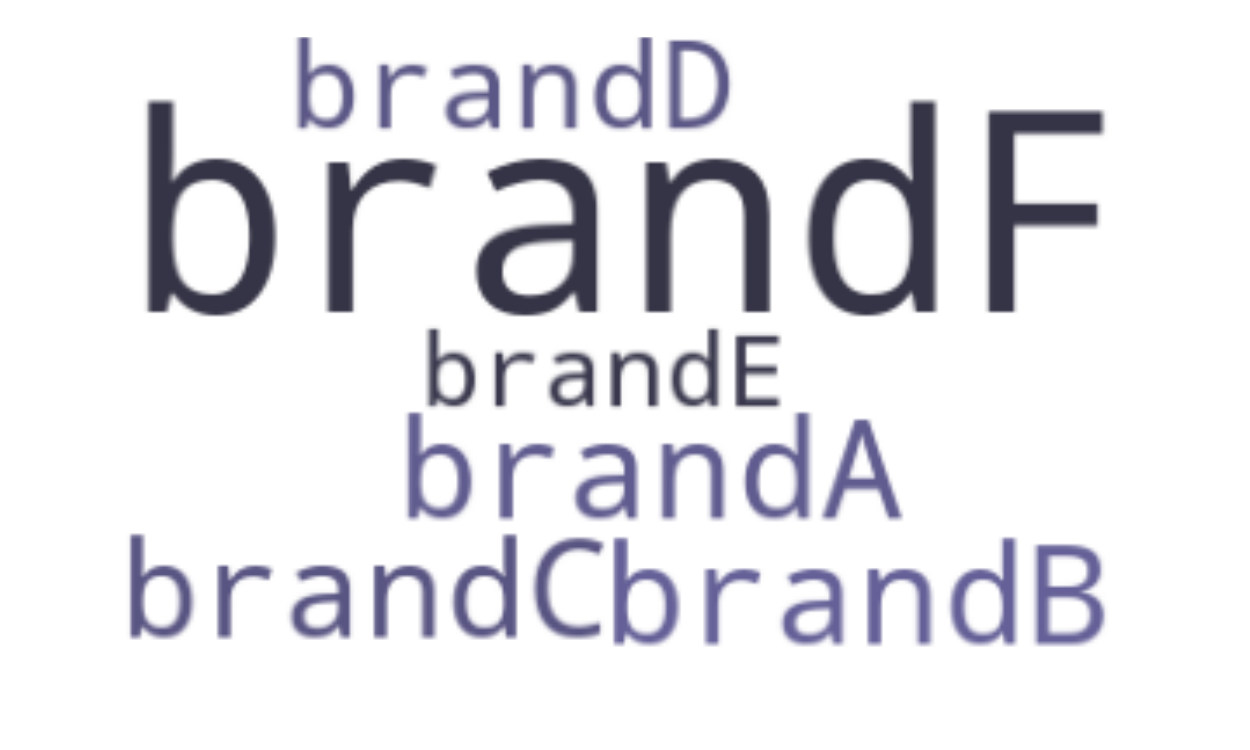} 
    &
    \hspace{-20pt}
    \includegraphics[width=0.37\columnwidth]{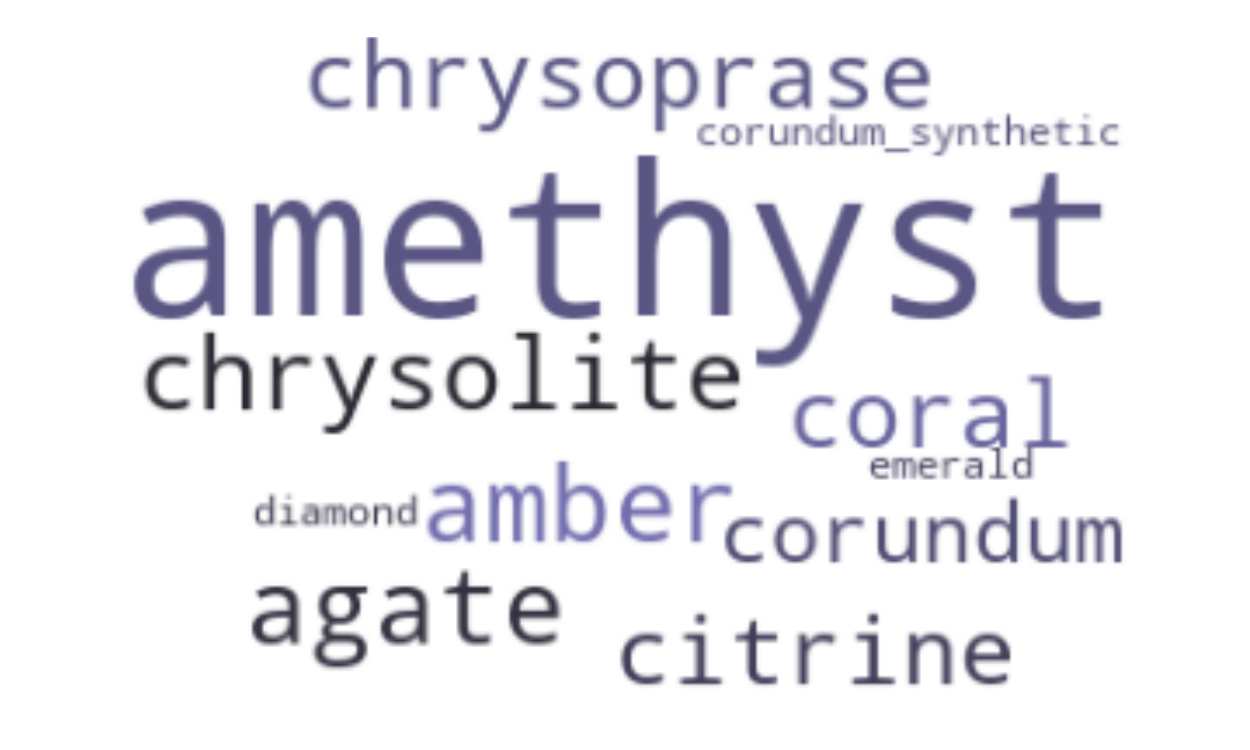}&  
    \hspace{-20pt}
    \includegraphics[width=0.37\columnwidth]{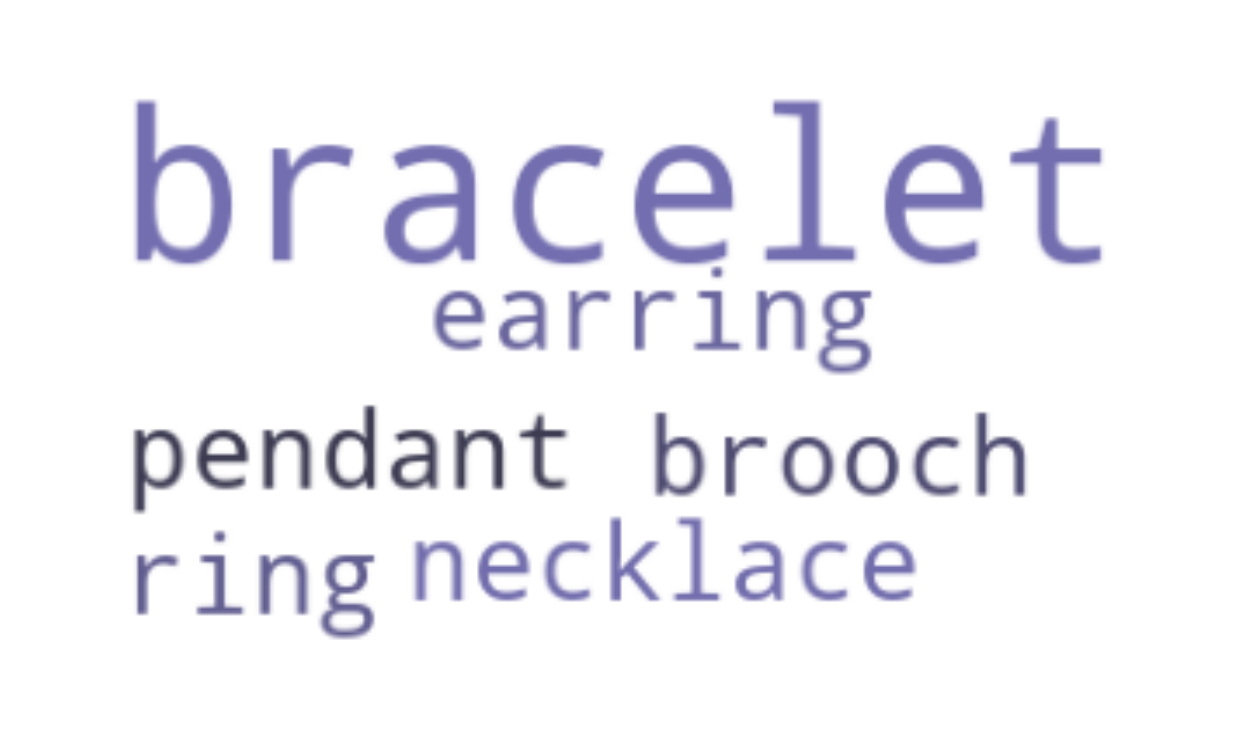}
    \vspace{-0.3em}
    \\
    & & \hspace{-6em}
    \vspace{-0.5em}
    (c-ii) Middle component & \\
    \includegraphics[height=60pt]{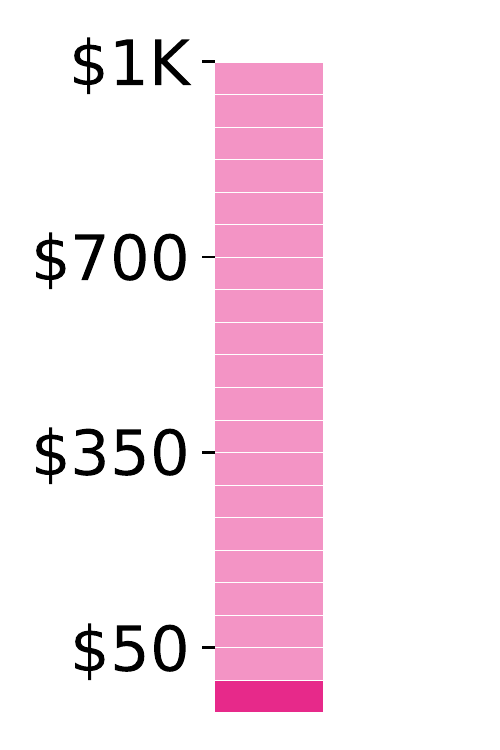} &
    \hspace{-30pt}
    \includegraphics[width=0.37\columnwidth]{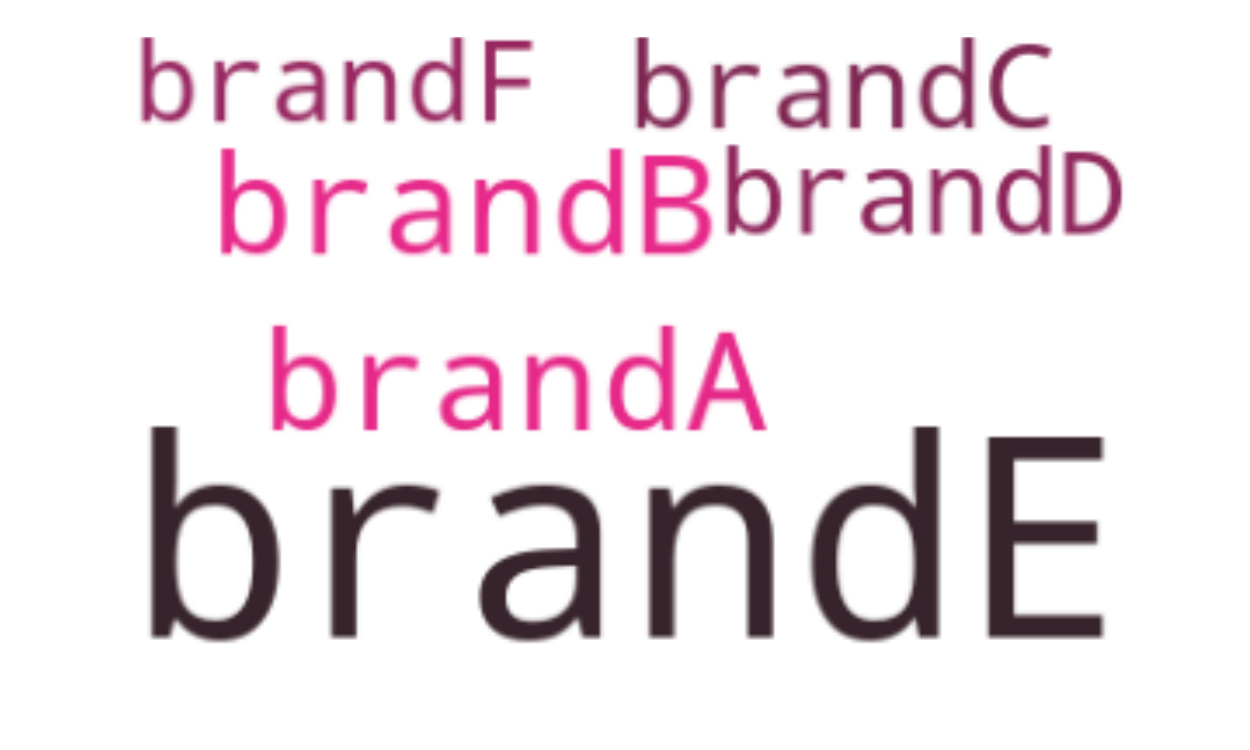}
    &
    \hspace{-20pt}
    \includegraphics[width=0.37\columnwidth]{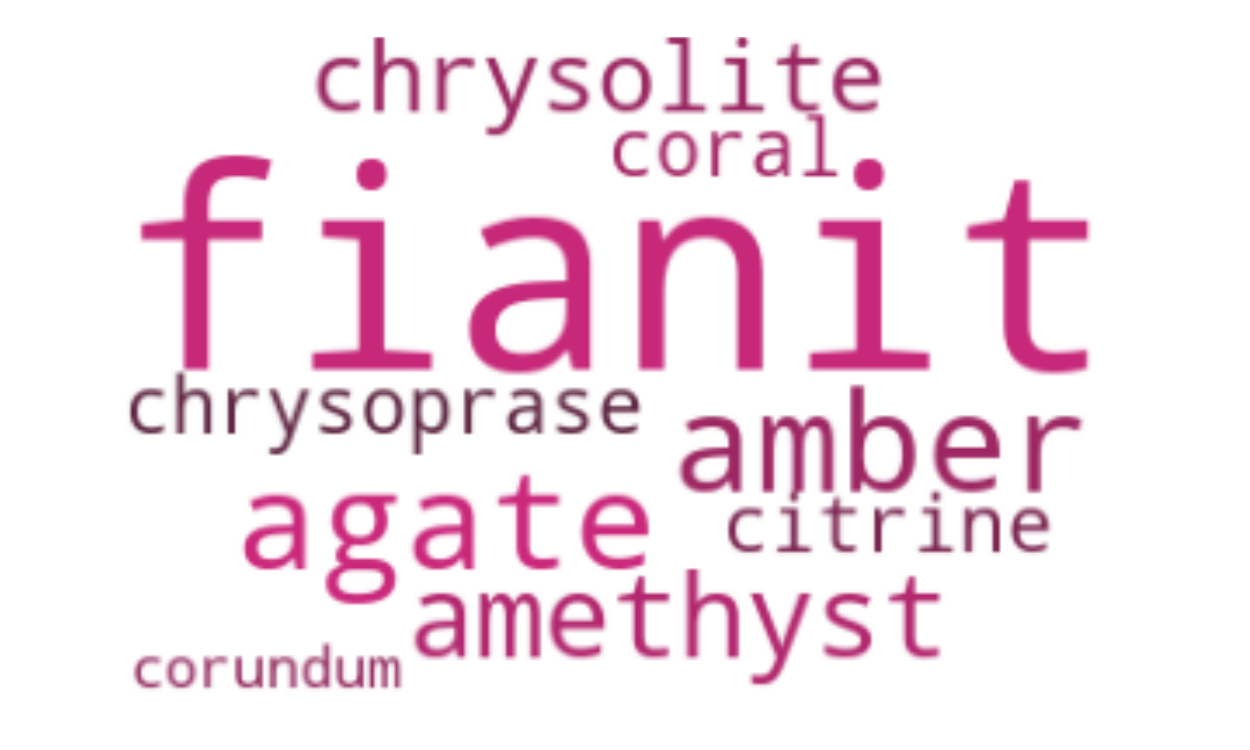}&  
    \hspace{-20pt}
    \includegraphics[width=0.37\columnwidth]{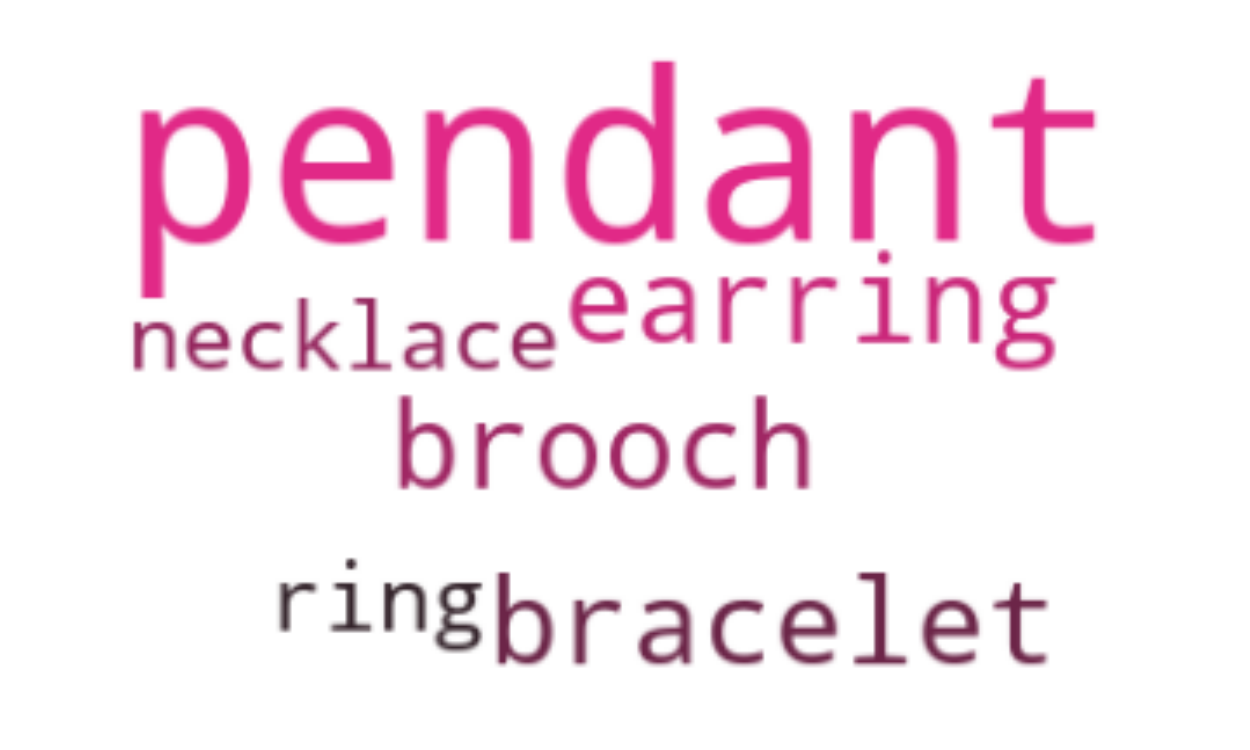}
    \vspace{-0.3em}
    \\
    & & \hspace{-4em} 
    \vspace{-1.5em}
    (c-iii) Affordable component &
    \end{tabular}
    \renewcommand{\arraystretch}{1}
    \end{tabular}
    \vspace{-0.7em}
    \caption{
    Market analysis of \method on the \jewelry dataset:
    (a) \method adaptively captured the changes in purchase behaviors 
    caused by
    the sale.
    (b) The three \topics (Luxury, Middle, Affordable) for time.
    A darker color denotes a stronger relationship between each \topic and the time.
    It shows when the \topics attract consumer interest.
    (c) The three \topics for each \attribute (price/brand/gem/accessory type) in Regime $\#1$.
    Each of the columns shows four \attributes .
    A darker color in the price rank and a larger size in the word cloud 
    denote a stronger relationship with the \topic .
    }
    \label{fig:jewelry_topics}
    \hspace{-1em}
    \vspace{-1em}
    \centering
    \includegraphics[width=\linewidth]{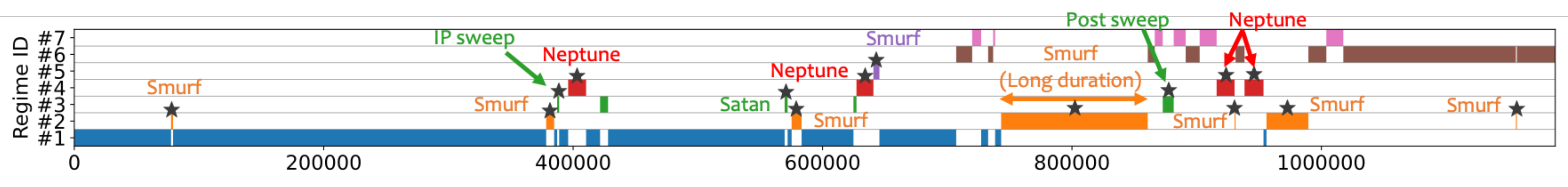}
    \vspace{-0.5em}
    \caption{Real-time intrusion detection of \method 
    on \kddcup dataset:
    the stars indicate intrusions.
    It successfully 
    identified the multiple types of intrusions of different durations
    (i.e., Regime~\#2: Smurf, 
    Regime~\#3: Probe attacks,
    Regime~\#4: Neptune).
    }
\vspace{-1.5em}
\label{fig:anom_effectiveness}
\end{figure*}
 }

In this section, 
we evaluate the performance of \method.
We answer the following questions through the experiments.
\begin{itemize}
    \item[(Q1)]
    \textit{Effectiveness}:
        How successfully does \method discover 
        compact description $\cand$ in given tensor streams?
    \item[(Q2)]
        \textit{Accuracy}:
        How accurately does 
        it achieve modeling, clustering, and streaming anomaly detection?
    \item[(Q3)]
    \textit{Scalability}:
        How does it scale in terms of computational time?
\end{itemize}

\vspace{-0.2em}
\myparaitemize{Datasets \& Experimental Setup}
We use eight 
real datasets and four synthetics.
The real datasets contain the event tensor streams of local mobility, e-commerce,
and network traffic/intrusion
and are summarized in Table~\ref{table:datasets}.
The synthetics and experimental settings are described in
Appendix~\ref{sec:app:expsetup}.

\myparaitemize{Baselines}
Our experiments are evaluated with twelve baselines.

Probabilistic generative models:
\bit
\item 
Latent Dirichlet Allocation (LDA) \cite{blei2003latent}
- A classical topic model, 
where the topic distribution is a multinomial.

\item
Neural Topic Model (NTM) 
\cite{WangLCKLS19}
- A topic model based on 
neural variational inference.
\item
TriMine \cite{MatsubaraSFIY12}
- A factorization method for a high-order tensor,
whose entries consists of multiple attributes and 
a timestamp.
\eit

Clustering approaches for time series, tensor, and data streams:
\bit
\item
K-means 
- The standard K-means clustering algorithm using Euclidean distance.
\item
TICC \cite{hallac2017toeplitz} 
- A clustering method for multivariate time series, 
where each cluster is characterized by a correlation network.
\item 
CubeMarker \cite{honda2019multi}
- An offline approach for discovering distinct patterns in tensor time series.
\item 
Time-Aware LSTM (T-LSTM) \cite{DBLP:conf/kdd/BaytasXZWJZ17}
- A time series clustering method for sequences with irregular time intervals.
\item 
DBSTREAM \cite{hahsler2016clustering}
-  A clustering algorithm for evolving data streams, 
which incrementally updates the density of clusters.
\eit

Unsupervised anomaly detection methods:
\bit
\item
Local Outlier Factor (LOF)
\cite{10.1145/342009.335388}
- A density-based method for a collection of data points.
\item
Isolation forest (iForest)
\cite{liu2012isolation} 
- An offline method, 
where 
a forest of random cuts of data points 
isolates outliers.
\item
Robust Random Cut Forest (RRCF)
\cite{10.5555/3045390.3045676}
- A tree-based approach, which is designed for use with streaming data.
\item
MemStream \cite{DBLP:conf/www/0001JSKH22} 
- A streaming approach using a denoising autoencoder and a memory module.
\eit
\vspace{-1em}
\subsection{Q1.Effectiveness}
\label{sec:effectiveness}
We first demonstrate how effectively \method works on real datasets.
Please also see the results in \electronics
in Appendix~\ref{sec:app:expeffect}.

\myparaitemize{Local Mobility}
The results for \ytaxi have already been presented in \autoref{fig:preview}.
As already seen, 
\method identifies multiple regimes and their shifting points (\autoref{fig:preview}~(a)),
and captures latent \topics (\autoref{fig:preview}~(b)(c)).
These patterns reflect complicated social conditions 
and help us to understand human activities.
 
\myparaitemize{Online Marketing Analytics}
\autoref{fig:jewelry_topics} shows our mining result 
for the \jewelry dataset.
This dataset is an e-commerce-log collected 
from an anonymous jewelry store.
Each of the logs consists of four attributes,
namely 20 prices, 6 anonymous brands, 32 gems, and 8 accessory types,
with 12-hour timestamps.
The price attribute is defined every fifty dollars 
up to 1K dollars i.e., 20 stages.
\bit
\item
\myiparapara{Regime identification}
As shown in \autoref{fig:jewelry_topics}~(a), 
\method 
generates Regime~$\#1$ 
and starts monitoring the tensor stream.
In late November,
it detects a regime transition and generates Regime~$\#2$.
Similarly,
in late May,
it generates a new Regime~$\#3$.
These periods coincide with 
Black Friday
~\footnote{Black Friday is a big sale event on the 4th Friday of November.}
and Memorial Day
~\footnote{On Memorial Day, most jewelry shops hold special sales.}.
This suggests 
our method captures the change in purchase behaviors 
caused by
the sale.
\item
\myiparapara{Multi-aspect \topic analysis}
\autoref{fig:jewelry_topics} (b)(c) shows three 
\topics,
which we manually named ``Affordable'', ``Middle'', and ``Luxury''.
First, 
\autoref{fig:jewelry_topics} (b) shows the three \topics 
for the time attribute (i.e., $\Mtime$) 
in each regime.
It demonstrates 
when each \topic attracts consumer interest.
\autoref{fig:jewelry_topics} (c) shows
the three \topics for each \attribute
(i.e., $\{\Matt^{(\lmode)}\}_{\lmode=1}^4$) in Regime $\#1$.
The \topics reveal 
the latent groups in four \attributes (i.e., price, brand, gem, accessory type).
It shows that
each \topic (row) is strongly 
related to different brands or accessories.
\eit
Here, we provide the reader with some application scenarios.
For targeted advertising and promotion strategies,
analysts investigate purchase logs 
with millions, billions 
or even trillions
\cite{rakthanmanon2012searching} of events.
However, this approach 
requires expert knowledge and time resources.
Since \method can automatically and efficiently summarize 
a massive amount of data into just a handful of \topics,
it could provide analysts with a summary of the market or user preferences.
Also, 
it is a more critical issue to analyze
how the purchase behaviors change due to fads and sales.
Our method recognizes the changes in dynamics as regime transitions
and adaptively generates the summary for each regime.

\TSK{
\begin{figure*}
\vspace*{-1.5em}
    \centering
    \hspace*{-0.08cm}
    \includegraphics[width=2.25\columnwidth]{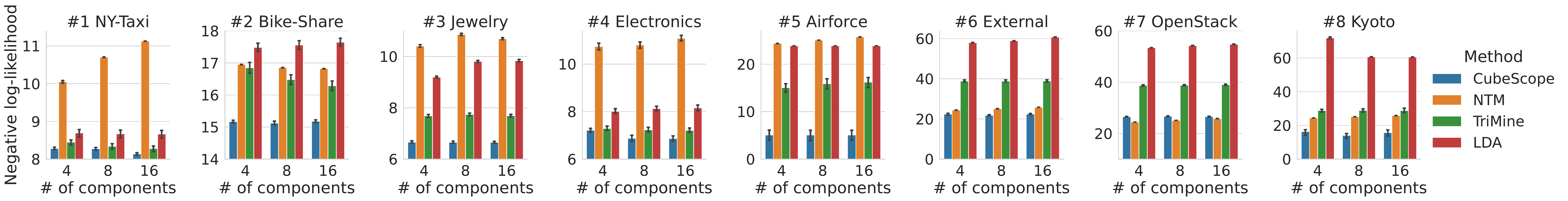}
    \centering
    \vspace{-2.5em}
    \caption{
        Modeling accuracy of \method:
        the method consistently 
        outperforms 
        its baselines (lower is better).
        }
    \label{fig:perplexity}
\vspace{-1.5em}
\end{figure*}
}
\TSK{
\begin{figure} 
    \includegraphics[width=1\columnwidth]{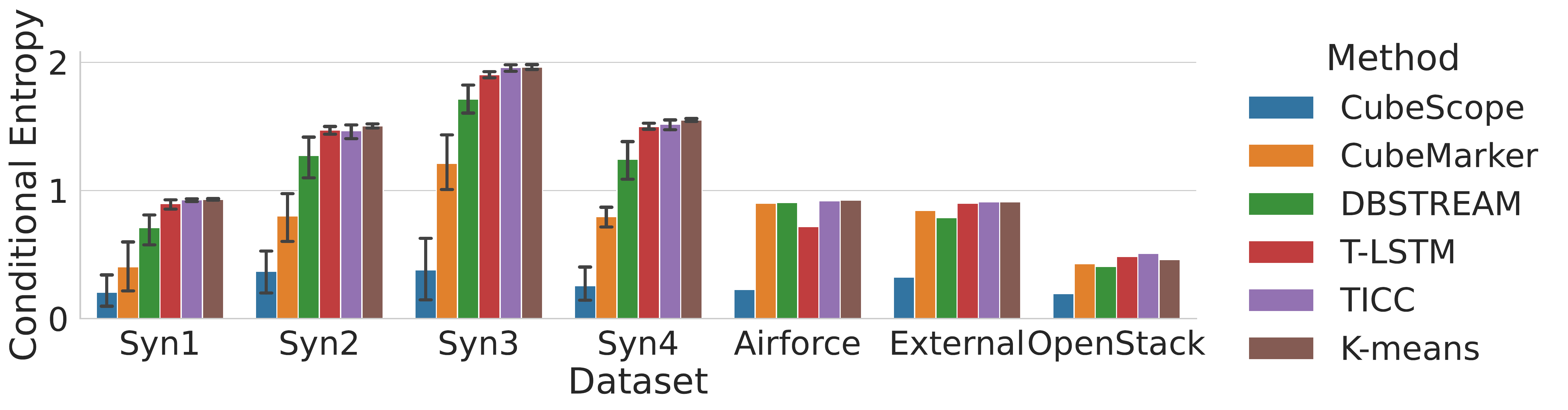}
    \vspace{-2.7em}
    \caption{ 
        Clustering accuracy
        with respect to conditional entropy (lower is better). 
    }
    \label{fig:seg_f1}
    \begin{tabular}{cc}
    \begin{minipage}{0.38\columnwidth}
    \hspace{-0.9em}
    \includegraphics[width=\linewidth]{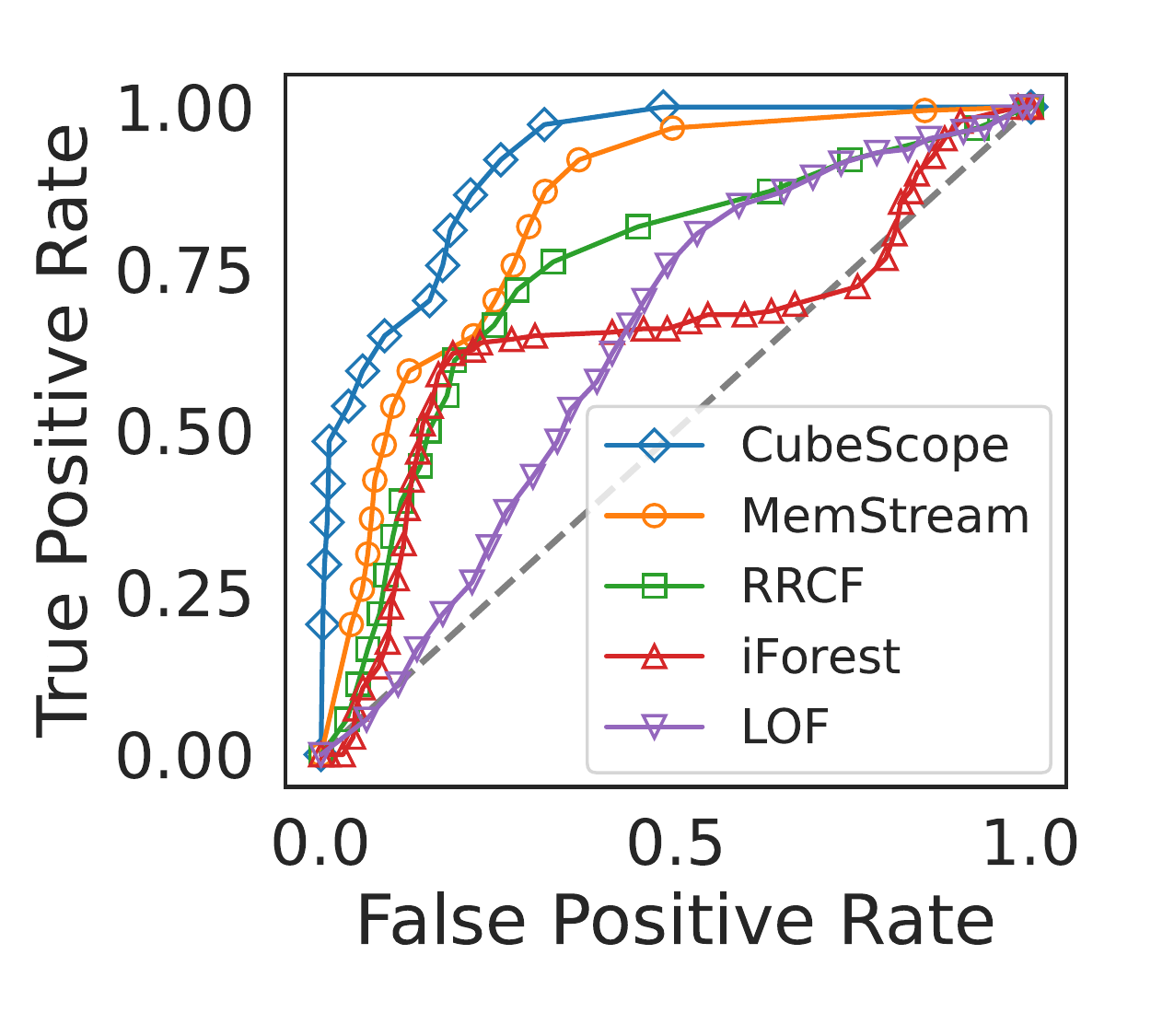}
    \end{minipage}
    &
    \begin{minipage}{0.68\columnwidth}
    \hspace{-2.3em}
    \vspace{-1.6em}
    \includegraphics[width=\linewidth]{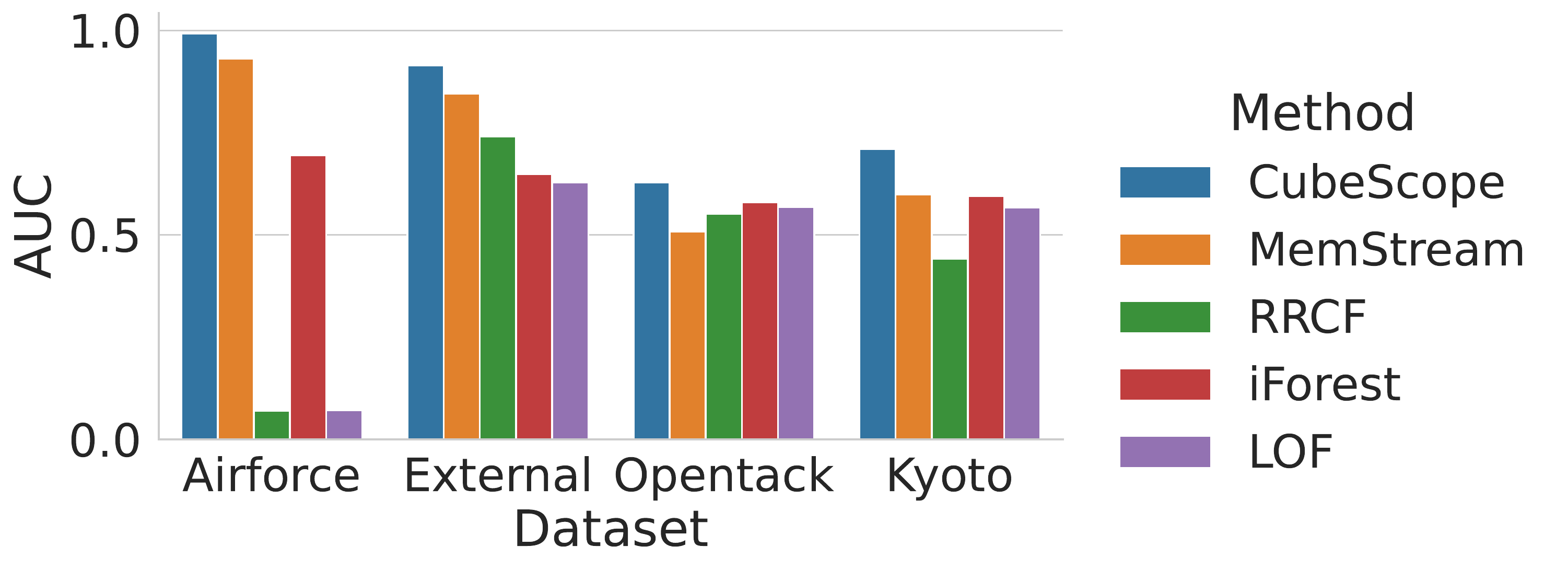}
    \end{minipage}
    \end{tabular}
    \vspace{-2em}
    \caption{
    Detection accuracy (higher is better):
    \method consistently wins.
    (left) The ROC curve on \ciexternal dataset.
    (right) The ROC-AUC on all datasets.
    }
    \label{fig:roc_auc}
    \begin{tabular}{cc}
    \hspace{-1.1em}
        \includegraphics[height=0.33\columnwidth]{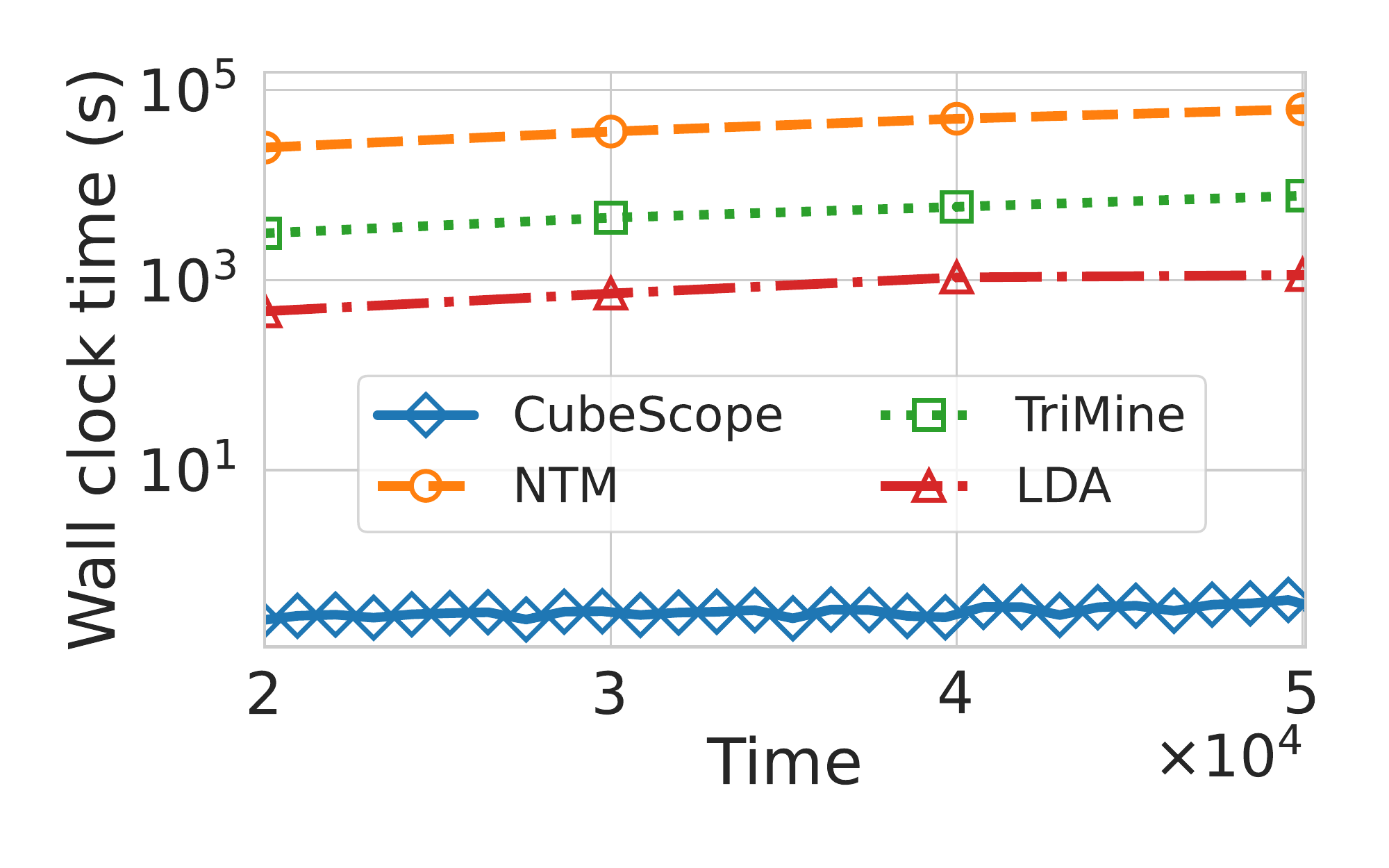} &
    \hspace{-2em}
        \includegraphics[height=0.33\columnwidth]{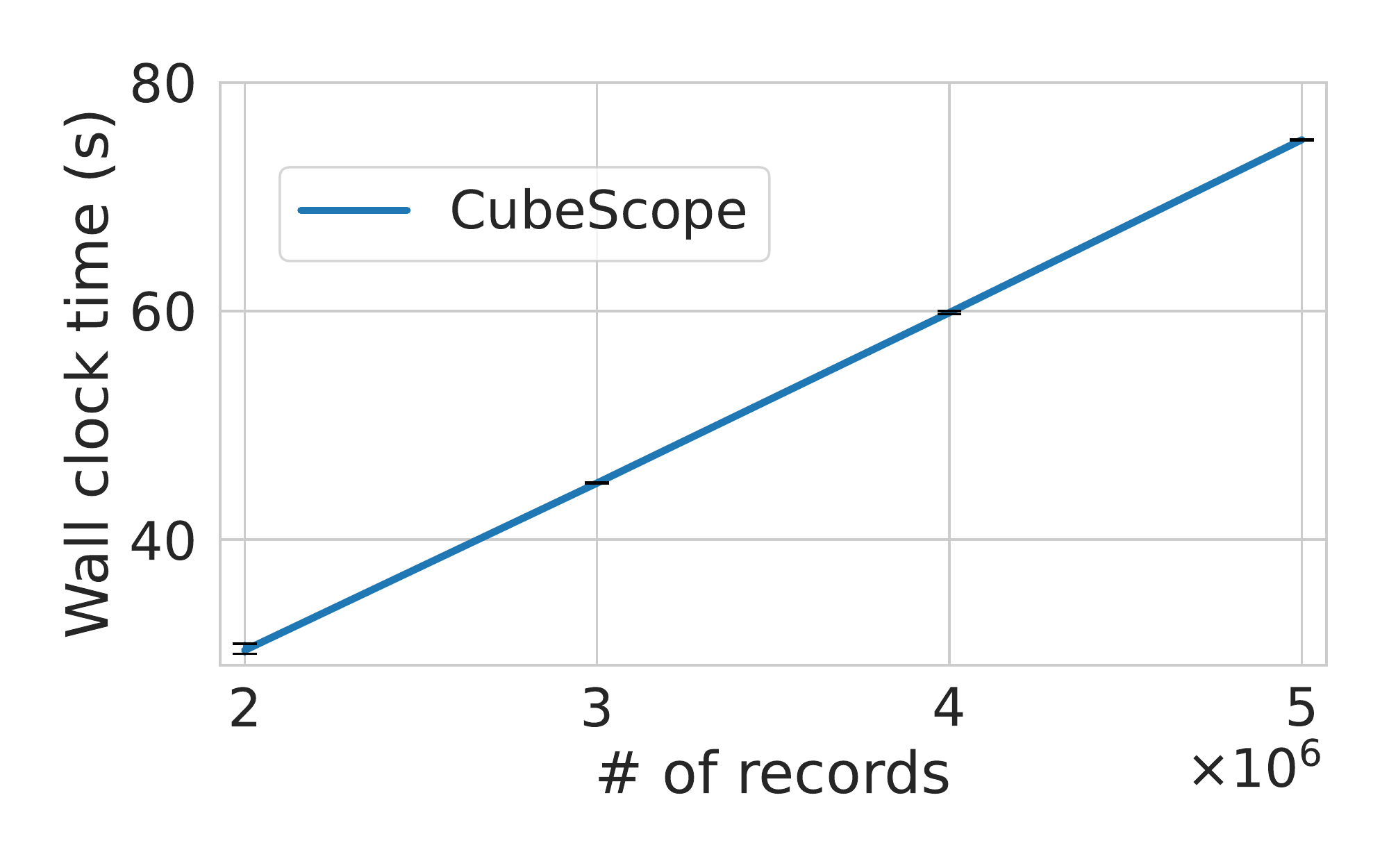}
    \end{tabular}
    \vspace{-2.3em}
    \caption{
        Scalability of \method : 
        (left) Wall clock time vs. data stream length.
        \method surpasses its competitors at any time.
        It is up to 312,000x faster than the baselines.
        (right) Average wall clock time vs. \# of records in a process.
        The algorithm scales linearly.
        }
    \label{fig:time}
\end{figure}
 }

\myparaitemize{Cybersecurity}
We demonstrate 
the real-time intrusion detection of \method .
\autoref{fig:anom_effectiveness} shows the result 
for the \kddcup dataset,
which contains multiple intrusions simulated in a military 
network environment within $1.2$ million records.
We investigated the detected regimes and 
found
that most corresponded to actual intrusions.
For example,
Regime~\#2~(orange) and Regime~\#4~(red) 
correspond to Smurf and Neptune attacks, respectively.
Regime~\#3~(green) captures IP sweep/Stan/Port sweep.
These intrusions are categorized 
as probe attacks \cite{serinelli2020training}.
Most importantly, 
these anomalies arise over time and thus their numbers, durations, and features are unknown in advance,
whereas
\method is fully automatic.
It
automatically recognizes anomalies and their types
while updating the information for each type of anomaly in a streaming setting.
We also conducted a quantitative analysis of this result
in terms of the clustering and anomaly detection in Section~\ref{subsec:accuracy}.

\subsection{Q2. Accuracy}
\label{subsec:accuracy}
We next evaluate the accuracy of \method
in terms of modeling, clustering, 
and anomaly detection.

\myparaitemize{Modeling}
We compared the modeling accuracy of \method and 
the probabilistic generative models.
\autoref{fig:perplexity} shows the average negative log-likelihood 
of every current tensor $\tensorC$ of length $168$ 
for each model.
For a fair comparison, we use 4, 8, and 16 \topics/topics for all models.
A lower value indicates better model construction.
Unsurprisingly,
\method achieves high modeling accuracy on all datasets
because it can capture high-order tensor streams.
Since LDA and NTM handle a tensor as a large matrix,
they cannot capture multi-aspect features.
TriMine is designed for a high-dimensional and sparse tensor, 
but it cannot capture tensor streams containing 
various 
time-evolving patterns (i.e., regimes).

\myparaitemize{Clustering}
Next, we show how accurately \method can find regimes.
We use both labeled datasets and synthetics 
because it is insufficient to evaluate only real datasets 
containing some time-evolving patterns 
that are not repeated (i.e., a few clusters appear once).
We generated four types of synthetics as follows \cite{hallac2017toeplitz} 
(see Appendix~\ref{sec:app:expsetup} for details),
and evaluated them ten times and reported the mean 
and standard deviation values.
Finally, we compare a standard measure of conditional entropy (CE)
from the confusion matrix (CM) of the prediction regime labels 
against true cluster labels.
The CE score shows the difference between 
two clusters using the following equation:
$CE = - \Sigma_{i,j} 
\frac{CM_{i,j}}{\Sigma_{i,j}CM_{i,j}}
\log\frac{CM_{i,j}}{\Sigma_{j}CM_{i,j}}$.
Note that an ideal confusion matrix must be diagonal, 
in which case $CE=0$.
\autoref{fig:seg_f1} compares \method with clustering methods.
Our method consistently outperforms its competitors
because it can handle 
high-dimensional and sparse tensors. 
TICC failed to capture sparse sequences.
T-LSTM is designed for sequences with sparsity
but cannot handle high-dimensional tensors.
CubeMarker can capture tensor time series
but cannot handle sparse tensors.
DBSTREAM has the ability to 
recognize clusters in data streams
but cannot capture multi-aspect features in tensor streams.

\myparaitemize{Anomaly Detection}
We evaluate anomaly detection performance
for four real datasets containing ground truth anomalies.
We first compute anomaly scores for \method and unsupervised baselines, 
and then select the top-$k$ most anomalous periods
($k=20,40,\ldots$).
Next, we compute true and false positive rates
for each method's output.
\autoref{fig:roc_auc} shows
the ROC curve for \ciexternal dataset 
and ROC-AUC for all datasets.
A higher value indicates better detection accuracy.
\method achieves a high detection accuracy for every dataset,
while other methods cannot detect anomalies very well
because only our approach captures dynamical multi-aspect patterns 
and utilizes them for subsequent anomaly detection.

\subsection{Q3. Scalability}
Finally, we evaluate the computational time needed by \method 
for large tensor streams.
The left part of \autoref{fig:time} shows the wall clock time of 
an experiment performed on a large \ytaxi dataset.
Thanks to the incremental update, our method is independent of data stream length.
In fact, 
our method achieved a constant computation time,
which was up to five orders of magnitude faster than its baselines.
The right part of \autoref{fig:time} shows the computational time 
of \decomposition
when varying the size of an input tensor.
Since \method achieves fast and efficient model estimation 
for $O(\ntotal)$ time (as discussed in Lemma~\ref{lemma:time}),
the complexity scales linearly with respect to the number of events.
\section{Conclusion}
    \label{060conclusions}
    In this paper,
we focused on the dynamic summarization of high-order event tensor streams 
and presented \method, 
which exhibits all the desirable properties
that we listed in the introduction; 
\vspace{-1em}
\bit
\item
\textit{Effective}:
it incrementally captures 
dynamical multi-aspect patterns 
and 
summarizes 
a semi-infinite collection of event tensor streams 
into an interpretable representation.
\item
\textit{General}:
our experiments with various datasets
showed that \method successfully discovers meaningful patterns and anomalies,
and outperforms state-of-the-art modeling, clustering, and anomaly detection
methods.
\item
\textit{Scalable}:
its computational time is constant and independent of the input data length 
and the dimensionality in each \attribute.
\eit
\hide{
\begin{enumerate}
    \item \textit{Effective}:
        our experiments with various types of social activities 
        showed that \method successfully discovers
        time-evolving regimes and latent \topics.
    \item \textit{Interpretable}:
        it is possible to compress complex event streams into 
        a compact summary that describes the key activities. 
    \item \textit{Automatic}:
        it can automatically recognize patterns 
        by exploiting novel coding scheme in a semi-infinite streams.
    \item \textit{Scalable}:
        its computational time is constant and independent of input data length as well as the number of \units in each of the \attributes.
\end{enumerate}
}

\begin{acks}
We sincerely thank the anonymous reviewers, 
for their time and effort during the review process.
This work was supported by
JSPS KAKENHI Grant-in-Aid for Scientific Research Number
JP20H00585,    
JP21H03446,    
JP22K17896,    
NICT 03501, 
MIC/SCOPE JP192107004, 
JST-AIP JPMJCR21U4, 
ERCA-Environment Research and 
Technology Development Fund JPMEERF20201R02. 
\end{acks}

\bibliographystyle{ACM-Reference-Format} 
\bibliography{%
BIB/ref_compression,%
BIB/ref_stream,%
BIB/ref_timeseries,%
BIB/ref_yasuko_self,%
BIB/ref_misc%
}

\clearpage
\appendix
\section*{Appendix}
\section{Proposed model}
\autoref{table:define} lists the symbols and definitions used in this paper.
All logarithms are to base 2, and by convention we use $0\log0 = 0$. 

\TSK{
\begin{table}[t]
\centering
\footnotesize
\caption{Symbols and definitions.}
\label{table:define}
\vspace{-1.5em}
\begin{tabular}{l|l}
\toprule
Symbol & Definition \\
\midrule
$\nmode $
    & Number of attributes of complex event tensor\\
$\nunits_1 \dots \nunits_{\nmode}$
    & A set consisting of a number of unique \units in each \attribute\\
${\duration}$
    & Length of whole tensor stream\\
${\tau}$
    & Length of current tensor\\
$\tensor$
    & Whole event tensor stream,

    $\tensor\shapeN^{\nunits_1 \times \dots \times \nunits_{\nmode} \times \duration}$\\
$\tensorC$
    & Current event tensor,
    $\tensorC\shapeN^{\nunits_1 \times \dots \times \nunits_{\nmode} \times \tau}$\\
\midrule
${\ntopic}$
    & Number of latent \topics \\
${\Matt^{(\lmode)}}$
    & $\lmode$-th \attribute \topic matrix, ${\ntopic}\times{\nunits_\lmode}$\\
${\Mtime}$
    & Time \topic matrix, ${\tau}\times{\ntopic}$\\
$Q$
    & FIFO queue for retaining past component matrices \\
$L$
    & The size of queue $Q$ \\
\midrule
$\nregime$ 
& Number of regimes\\
$\regime_\lregime$
    & $\lregime$-th regime parameter,
    i.e., $\regime_\lregime =\{\Matt^{(1)}, \ldots, \Matt^{(\nmode)}, \Mtime \}$\\
$\regimeset$
    & Regime set,
    i.e., $\regimeset =\{\regime_1, \ldots, \regime_\nregime\}$\\
$\nshiftp$ 
& Number of regime assignments (i.e., segments)\\
$\regimeassign_{\lshiftp} $
    & Trajectory of shift to $\lregime$-th regime at time $t_s$, i.e., $\regimeassign_{\lshiftp} = (t_s,\lregime)$ \\
$\regimeassignment$
    & Regime assignments, 
    i.e., $\regimeassignment =\{\regimeassign_1, \ldots, \regimeassign_\nshiftp \}$\\
$|\segmentset^{-1}_{\lregime}|$
    &Total segment length of the regime $\regime_\lregime$
    \\
\midrule
$\cand$
    & Compact description,
    i.e., $\cand =\{ \nregime, \regimeset, \nshiftp, \regimeassignment \}$ \\
$\score{\tensorC}$
    & Anomalousness score of $\tensorC$ \\
\bottomrule
\end{tabular}
\normalsize
\vspace{-1em}
\end{table}  
}

\section{Streaming Algorithm}
\label{sec:app:algo}
Algorithm ~\ref{alg:main}\ shows the overall procedure for \methodalgo,
which composed of \decomposition (Algorithm~\ref{alg:decomp}) and \compression (Algorithm.~\ref{alg:compression}). 
\decomposition continuously monitors a current tensor $\tensorC$
and generates a regime $\regime_c$.
Then, \compression updates 
the compact description $\cand$ 
with $\regime_c$ 
and measures the anomalousness of $\tensorC$.

\TSK{
\begin{algorithm}[t]
    \footnotesize
    \caption{\methodalgo $(\tensorC, \cand, Q)$}
    \label{alg:main}
\begin{algorithmic}[1]
    \REQUIRE
        1. Current tensor
        $\tensorC\shapeN^{
            \nunits_1 \times \ldots \times \nunits_{\nmode} \times \tau
        }$\\
        \hspace{1.35em}
        2. Previous candidate solution $\cand = \{\nregime, \regimeset, \nshiftp, \regimeassignment \}$\\
        \hspace{1.35em}
        3. Previous past parameter set $Q$
    \ENSURE
        1. Updated candidate solution  $\cand'$\\
        \hspace{2.05em}
        2. Updated past parameter set $Q'$\\
        \hspace{2.05em}
        3. Anomalousness score $\score{\tensorC}$
    \STATE $\regime_c, Q'$ = \decomposition($\tensorC$, $Q$);
    \STATE $\cand', \score{\tensorC}$
    = \compression($\regime_c, \tensorC, \cand $);
    
    \STATE {\bf return}  $\cand', Q', \score{\tensorC}$;
\end{algorithmic}
    \normalsize
\end{algorithm}

\begin{algorithm}[t]
    \footnotesize
    \caption{\decomposition $(\tensorC, Q)$}
    \label{alg:decomp}
\begin{algorithmic}[1]
    \REQUIRE
        1. Current tensor
        $\tensorC\shapeN^{
            \nunits_1\times\ldots\times\nunits_{\nmode}\times\tau}$\\
        \hspace{1.35em}
        2. Previous past parameter set $Q$  
    \ENSURE
        1. Current model parameter set $\regime_c = \{\Matt^{(1)}, \ldots \Matt^{(\nmode)}, \Mtime\}$
        \\
        \hspace{2.05em}
        2. Updated past parameter set $Q'$

\FOR {{\bf each} iteration}
  \FOR {{\bf each} non-zero element $\Ex$ in $\tensorC$}
    \FOR {{\bf each} entry for $\Ex$}
      \STATE Draw hidden variable $z$; 
      // According to Eq.~(\ref{eqn:gibbs_sampling_time})
    \ENDFOR
  \ENDFOR
\ENDFOR
\STATE Compute $\Matt^{(1)}, \ldots ,\Matt^{(\nmode)}, \Mtime$;
//According to Eq.~(\ref{eqn:normalizedMatrices})
\STATE $\regime_c \leftarrow \Matt^{(1)}, \ldots ,\Matt^{(\nmode)}, \Mtime$; 
\STATE $Q$.deque; \ \ 
// Remove the oldest set of \topic matrices
\STATE $Q' \leftarrow Q$.enque($\regime_c$); \ \ 
// Insert a set of current estimated matrices $\regime_c$
  \STATE {\bf return}  $\regime_c, Q'$ ; 
\end{algorithmic}
    \normalsize
\end{algorithm}

\begin{algorithm}[t]
    \footnotesize
    \caption{\compression $(\regime_c, \tensorC, \cand)$}
    \label{alg:compression}
\begin{algorithmic}[1]
    \REQUIRE
    1. Candidate model parameter set 
    $\regime_c =\{\Matt^{(1)}, \ldots \Matt^{(\nmode)},\Mtime\}$\\
    \hspace{1.35em}
    2. New observation tensor
    $\tensorC\shapeN^{
    \nunits_1 \times \ldots \times \nunits_{\nmode} \times \tau}$\\
    \hspace{1.35em}
    3. Previous compact description $\cand = \{\nregime, \regimeset, \nshiftp, \regimeassignment\}$\\
    \ENSURE
        1. Updated compact description $\cand' = \{\nregime', \regimeset', \nshiftp', \regimeassignment'\}$ \\
        \hspace{2.05em}
        2. Anomalousness score  $\score{\tensorC}$ \\
    \STATE 
    /* Update compact description $\cand$ */
    \IF {$\costT{\tensorC}{\regime_{p}}$ is less than $\costT{\tensorC}{\regime_{c}}$}
    \STATE 
    /* Stay in the previous regime $\regime_p$ */
    \STATE$ \regime_{p} '\leftarrow$ \textsc{Regime update}$(\regime_{p},\regime_{c})$;
    // According to Eq.~(\ref{eqn:model_update})
    \ELSE
        \STATE $\regime_{\alt} = \argmin_{\regime \in \regimeset } \costT{\tensorC}{\regime}$;
        \IF {$\costT{\tensorC}{\regime_{c}}$ is less than $\costT{\tensorC}{\regime_{\alt}}$}
            \STATE 
            /* Shift to the candidate regime $\regime_c$ */
            \STATE $\nregime' \leftarrow \nregime+1$;
            $\regimeset' \leftarrow \regimeset \cup \regime_{c}$;
            \STATE
            $\nshiftp' \leftarrow \nshiftp+1$;
            $\regimeassignment' \leftarrow \regimeassignment \cup (t, \nregime+1)$;
        \ELSE
            \STATE 
            /* Shift to the existing regime $\regime_{\alt}$ */
            \STATE$ \regime_{\alt}' \leftarrow$ \textsc{Regime update}$(\regime_{\alt},\regime_{c})$; 
            // According to Eq.~(\ref{eqn:model_update})
            \STATE
            $\nshiftp' \leftarrow \nshiftp+1$;
            $\regimeassignment' \leftarrow \regimeassignment \cup (t, \alt)$;
        \ENDIF
        
    \ENDIF
    \STATE 
    /* Compute anomalousness score*/
    \STATE $\norm \leftarrow \argmax_{\lregime \in
    \nregime}|\segmentset^{-1}_{\lregime}|$;
    \STATE $\score{\tensorC} \leftarrow \costC{\tensorC}{\regime_{\norm}}$;
    
        \RETURN
        $\cand' = \{\nregime', \regimeset'\, \nshiftp'\, \regimeassignment'\}$,
        $\score{\tensorC}$;\ \ 
    \end{algorithmic}
    \normalsize
\end{algorithm}
}

\myparaitemize{Proof of Lemma \ref{lemma:time}}
\begin{proof}
For each time point,
\methodalgo first runs \decomposition ,
which draws hidden variables $z_{\lunit,\ltime}$ with each entry for non-zero element $x_{\lunit,\ltime}$ in $\tensorC$.
This process requires $O(\#iter\cdot\ntopic\ntotal )$, 
where $\#iter$ is the number of iterations for drawing $z$,
$\ntopic$ is the number of \topics,
and $\ntotal$ is the total number of event entries in $\tensorC$ 
(i.e., $\sum_{\lunit_1} \cdots \sum_{\lunit_{\nmode}} \sum_{\ltime} \Ex_{\lunit_{1},\ldots,\lunit_{\nmode},\ltime}$). 
Since $\#iter$ and $\ntopic$ are small values and constant, they are negligible. 
Thus, the complexity of \decomposition is $O(\ntotal)$.
In \compression, it tracks $\regime_c$ and $\regime_p$.
If it employs the previous regime $\regime_p$ for current tensor $\tensorC$,
it can quickly update the regime,
which requires $O(1)$ time.
Otherwise, it then tries to find the optimal regime in $\regimeset$,
which requires $O(\nregime)$ time.
Overall, \methodalgo needs these two algorithms.
Thus, the complexity is at least $O(\ntotal)$ and 
at most $O(\ntotal + \nregime)$ per process.
\end{proof}

\TSK{
\begin{figure*}[!t]
    \begin{tabular}{ccc|c}
        \renewcommand{\arraystretch}{-0.5}
    \begin{minipage}{0.4\linewidth}
        \centering
        \hspace{-1em}
        \includegraphics[width=0.9\linewidth]{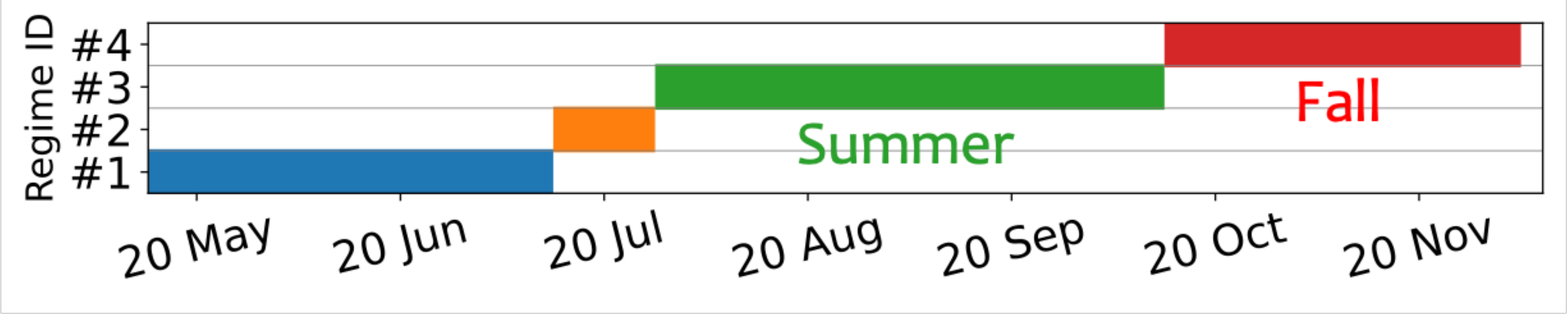}\\
        \vspace{-0.7em}
        (a) Market regimes \\
        \hspace{-1em}
        \includegraphics[width=0.9\linewidth]{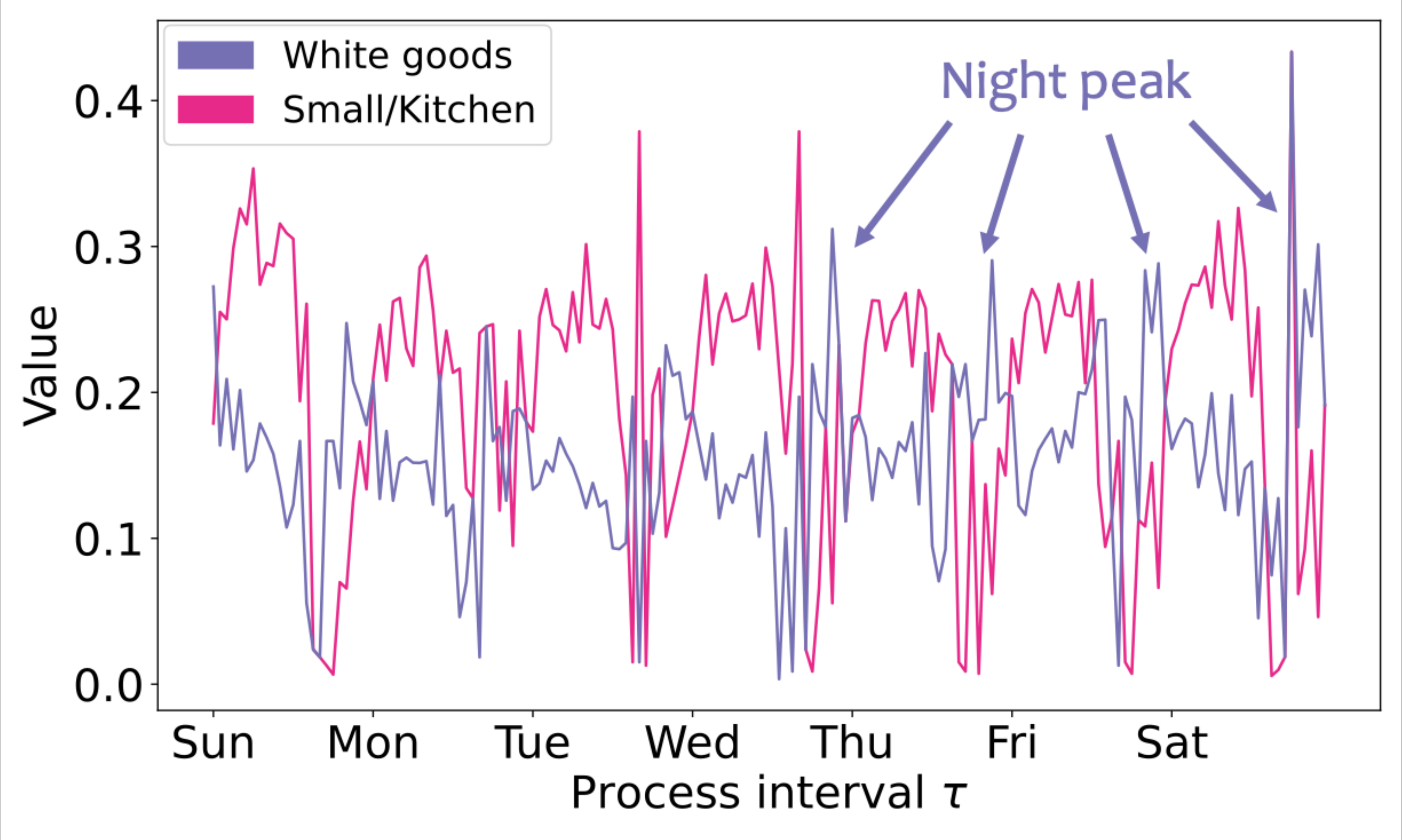}
        \\
        \vspace{-0.7em}
        (b) Time evolution of two major \topics 
    \end{minipage}
    &
    \hspace{-3em}
    \begin{minipage}{0.2\linewidth}
    \centering
    \includegraphics[width=\linewidth]{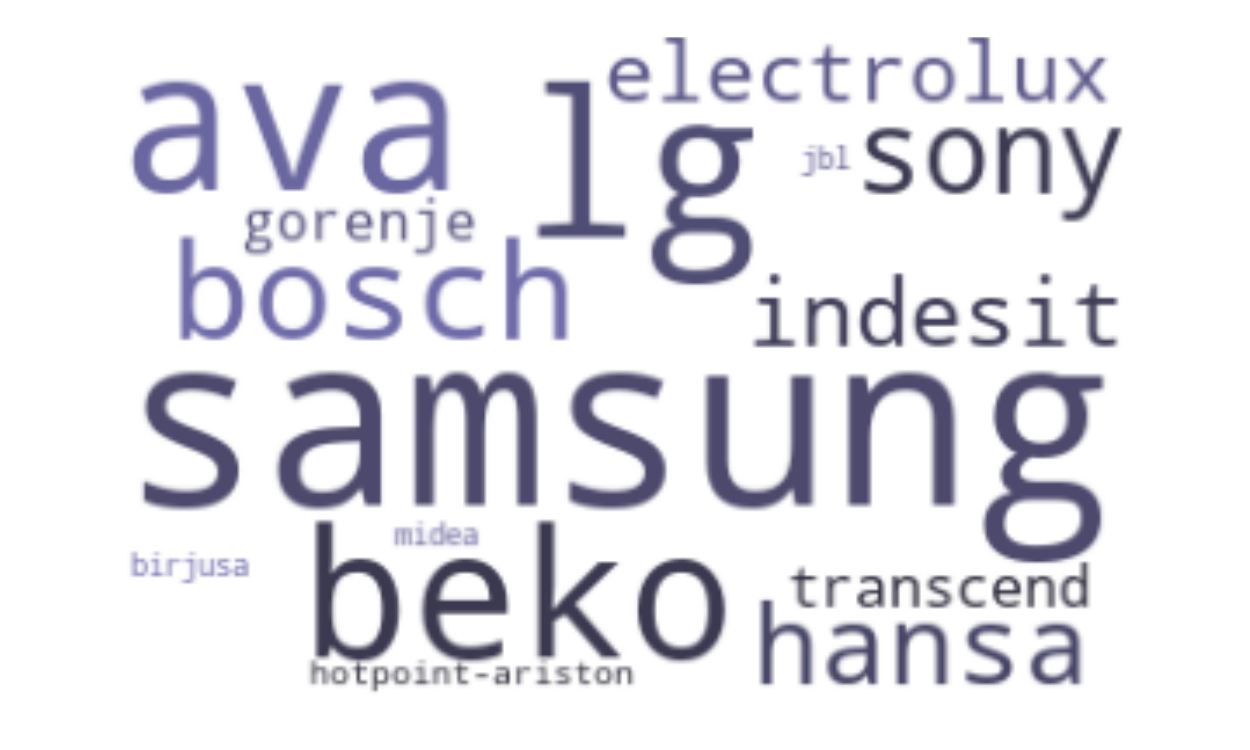}
    \includegraphics[width=\linewidth]{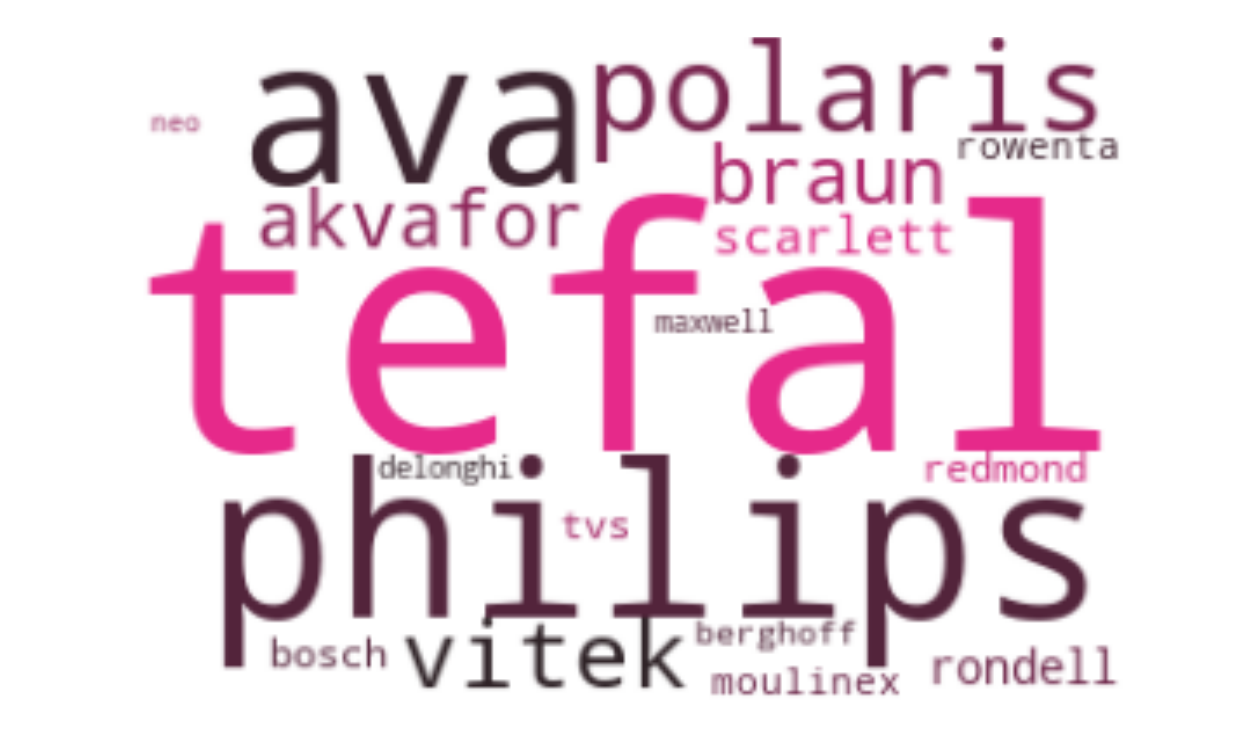}\\
    (c-i) Popular brands in \\White goods (top) and Small/Kitchen (bottom) \topics.
    \end{minipage}
    &
    \hspace{-2em}
    \begin{minipage}{0.2\linewidth}
    \centering
    \includegraphics[width=\linewidth]{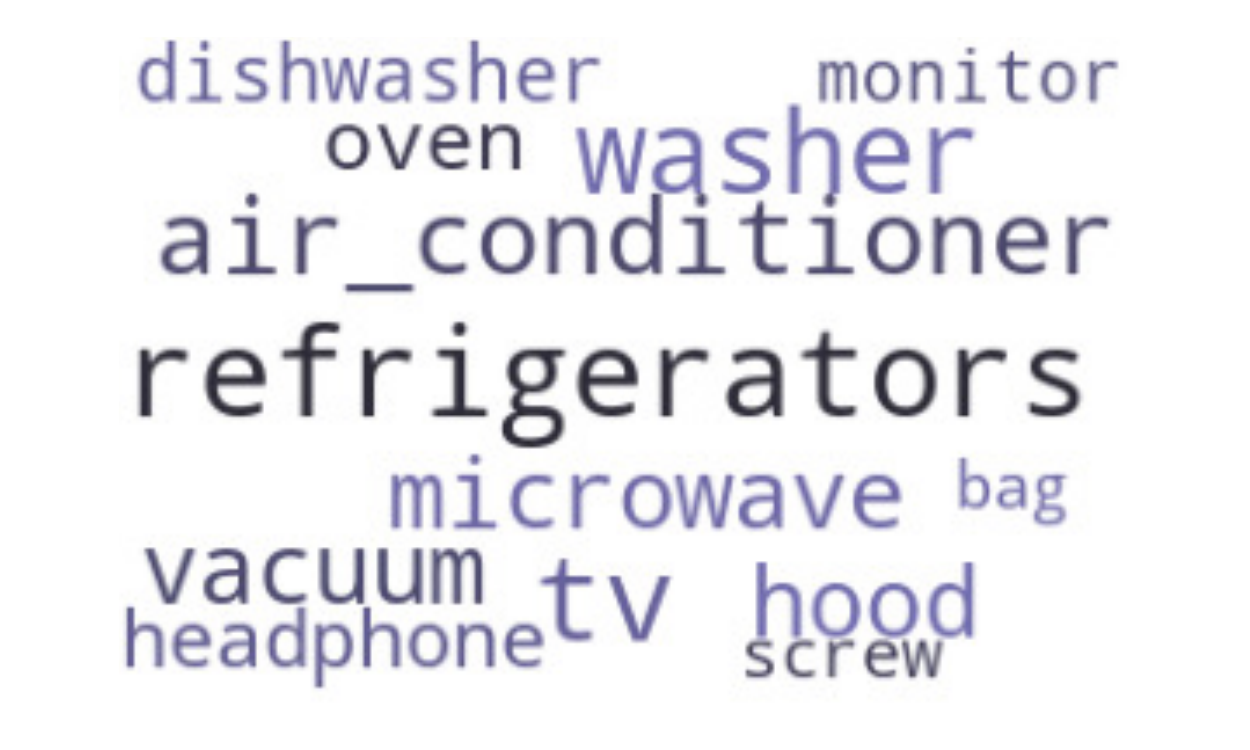}
    \includegraphics[width=\linewidth]{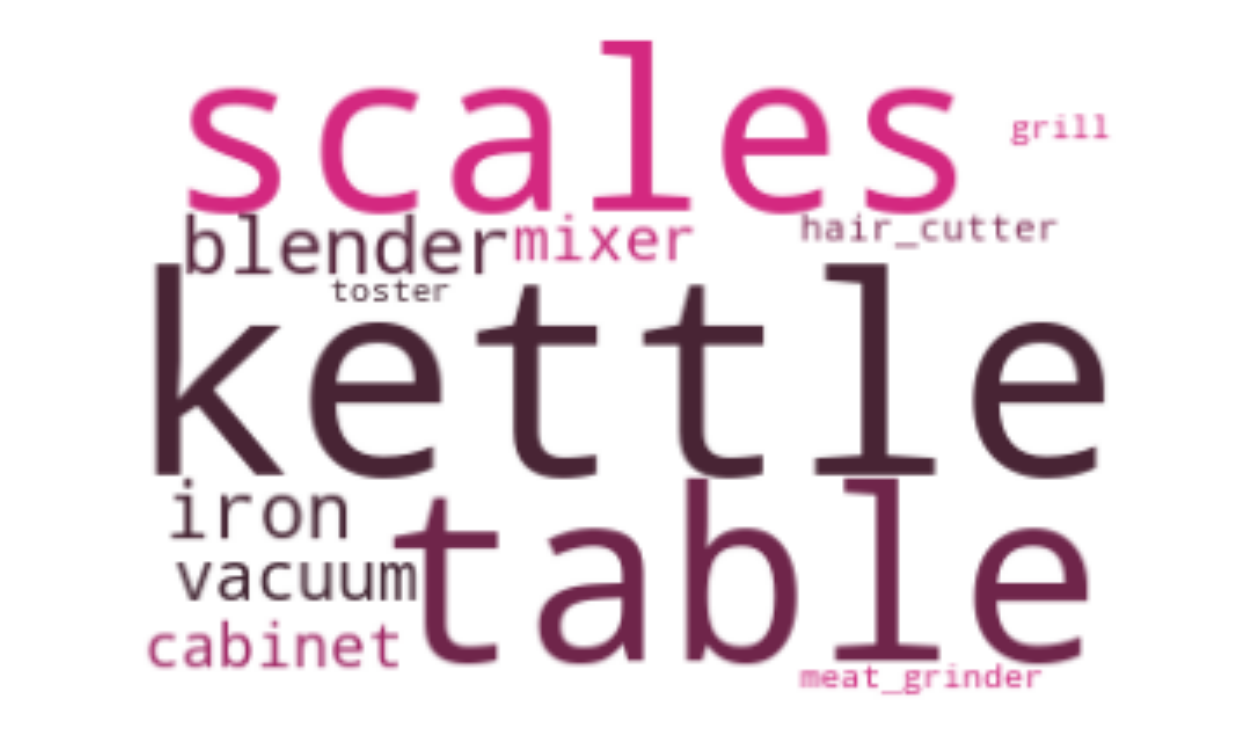}\\
    (c-ii) Popular items in \\White goods (top) and Small/Kitchen (bottom) \topics.
    \end{minipage}
    \hspace{-1.1em}
    &  
    \hspace{-1em}
    \begin{minipage}{0.2\linewidth}
    \vspace{-1.25em}
    \centering
    \includegraphics[width=\linewidth]{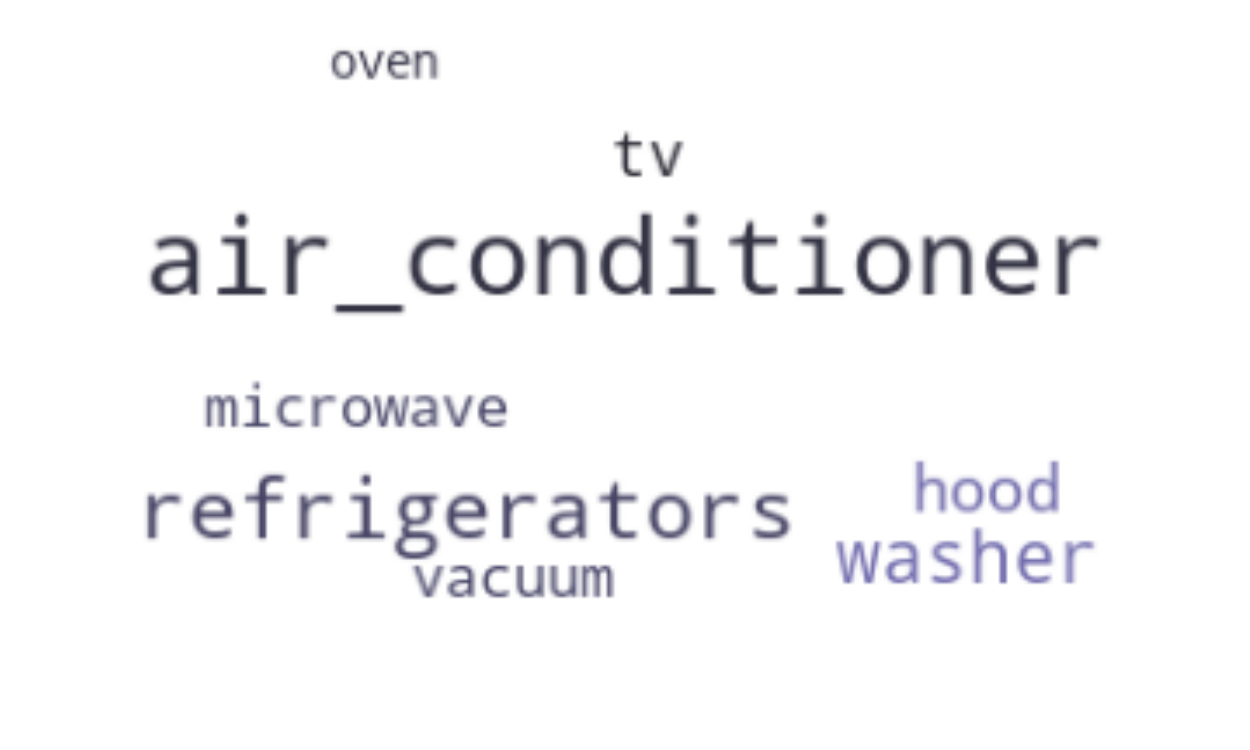} 
    \includegraphics[width=\linewidth]{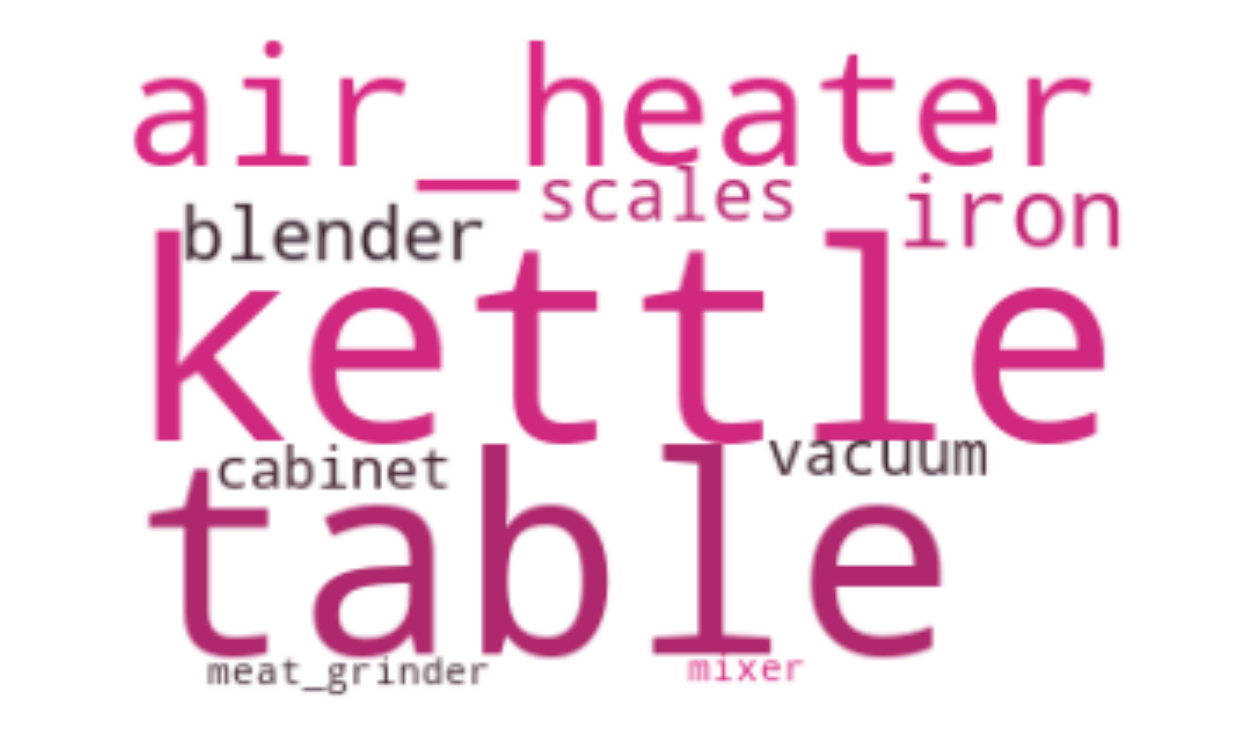} \\
    (d) Changes of popular items 
    with regime transitions
    \end{minipage}
    \renewcommand{\arraystretch}{1}
    \end{tabular}
    
    \vspace{-0.5em}
    \caption{
    Customer behavior modeling on \electronics:
        (a)
        \method discovered a total of four regimes
        that reflect the seasonality of customer behavior.  
        (b)
        Two \topics show clear daily periodicity;
        The White goods \topic spikes at night.
        The Small/Kitchen \topic shows high peaks during the daytime.
        (c)
        Popular brand/item categories for each \topic.
        A larger size denotes a closer relationship.
        (d)
        Popular item categories change with regime transitions.
        Top: air conditioners become the most frequent item in Regime \#2 (early Summer).
        Bottom: air heaters are the popular items in Regime \#4 (Fall).
    }
    \label{fig:electronics}
    \vspace{-1.5em}
\end{figure*}
}

\myparaitemize{Model Initialization}
When we start the iteration 
with regime set $\regimeset$ and $Q$ for past parameter set, 
we uniformly take several sample segments with interval $\tau$
from initial tensor $\tensor_s$ and then estimate the model parameters $\regime_s$ for each.
The most appropriate regime set is determined by monitoring the total encoding cost 
while increasing the number of model parameters $\regime_s$:
\small
\begin{align}
    \label{eqn:costinit}
    \regimeset
        &= \argmin_{\regimeset \in \regimeset_s} \costT{\tensor_s}{\regimeset},
    \end{align}
\normalsize
where $\regimeset_s = \{\regime_{s1}, \regime_{s2}, \dots \}$ is  
a set of regimes estimated from each sample.
We also set 
$L$ to the average interval of the shifting points 
(i.e., the average length of segments).

\myparaitemize{Process Interval $\tau$}
The parameter $\tau$ determines the size of a current tensor, 
as well as the minimum granularity of regimes.
Users need to know regimes under various granularities 
(e.g., daily and weekly patterns),
thus $\tau$ is generally chosen depending on the application.
The runtime of \decomposition scales linearly with the number of records
in a current tensor rather than the size of $\tau$.
Although a larger $\tau$ imposes the algorithm to process a larger current tensor,
there should be a small impact on the runtime 
because we assume sparse tensor streams.

\section{Experiments}
\label{sec:app:exp}

\subsection{Experimental Setup}
\label{sec:app:expsetup}
We conducted our experiments on
an Intel Xeon E$5$-$2637$ $3.5$GHz quad core CPU
with $192$GB of memory and running Linux.

\myparaitemize{Generating the Datasets}
We first generate four types of sparse event tensors (1, 2, 3, and 4),
which have $100K$ observations as $\tensor \in \mathN^{100 \times 100 \times 100 \times 100}$.
Each \attribute of an event entry is drawn from multinomial distributions
whose parameter is defined by random values $[0.1, 0.5]$ and a Dirichlet prior.
Finally, 
four different synthetic datasets are built 
by using different combinations of event tensors as follows \cite{hallac2017toeplitz}:
``1,2,1'', ``1,2,3,2,1'', ``1,2,3,4,1,2,3,4'', ``1,2,2,1,3,3,3,1''.

\myparaitemize{Implementaion \& Parameters}
We used the open-source implementation of 
LDA, K-means, LOF, and iForest in \cite{pedregosa2011scikit}.
For NTM, we implemented it based on the pytorch framework 
and applied Adam optimization with a learning rate of $1e-3$,
following the design and the parameter setting in \cite{WangLCKLS19}.
We also used open-sourced implementations of 
TICC \cite{hallac2017toeplitz}, 
T-LSTM \cite{DBLP:conf/kdd/BaytasXZWJZ17},
RRCF \cite{10.5555/3045390.3045676}, 
CubeMarker \cite{honda2019multi},
and MemStream \cite{DBLP:conf/www/0001JSKH22}, 
provided by the authors,
following parameter settings as suggested in the original papers. 
For a fair comparison in terms of computational time, 
we implemented TriMine in Python,
following C implementation provided by the authors.
The input for LDA/NTM is bag-of-words representations of all the categories,
i.e., $\mathbf{W} \in \mathN^{\tau \times (\nunits_1+ \cdots + \nunits_\nmode)}$.
In evaluation of clustering accuracy,
the width of a current tensor is set with $10$.
Since TICC and T-LSTM need to specify the number of clusters, 
we set the true number of clusters.
DBSTREAM, which is implemented in \cite{river}, and \method 
are automatically determine the number of clusters.
We set the radius of each micro-clusters as $8.5$ for DBSTREAM,
and the number of \topics $\ntopic = 8$ for \method.
To validate detection accuracy, we set $\tau=1$ for all methods.
We used a $5\tau$ length of the stream to conduct the model initialization for \method.

\subsection{Effectiveness}
\label{sec:app:expeffect}
We also demonstrate how effectively \method works on the \electronics dataset.

\myparaitemize{Online Marketing Analytics}
\autoref{fig:electronics} shows stream mining results 
of \method on the \electronics.
This data is the purchase data obtained over a year 
from a large home appliances and electronics online store.
The data contains a list of two attributes; 867 brands and 124 item categories with an hourly timestamp.
\bit
\item
\myiparapara{Regime identification}
\method discovered four type of regimes in \autoref{fig:electronics} (a).
Specifically, 
our method found 
Regime \#2 during a short period around July,
and then 
discovered Regime \#3 for the summer season and Regime \#4 for the fall season.
This result shows that the behaviors of purchases shift 
with the transition of seasons.
%
\item
\myiparapara{Multi-aspect \topic analysis}
\autoref{fig:electronics} (b) shows
the temporal evolution of two major \topics ,
which are shown in the time \topic matrix $\Mtime$ in Regime~\#1.
We manually named the two \topics ``White goods'' and ``Small/Kitchen''.
These \topics exhibit contrasting behavior. 
The White goods sequence peaks at night,
while the Small/Kitchen sequence peaks during the daytime.
\hide{
In practice,
this discovery leads to more effective advertisements,
which highlight items depending on the times at which they are most popular.
Thanks to the automatic regime detection,
it can also adaptively switch the advertisements 
even if the customer behavior changes with the seasons.
}
\autoref{fig:electronics} (c) shows 
the \attribute \topic matrices $\{\Matt^{(\lmode)}\}_{\lmode=1}^2$ in Regime $\#1$,
namely the latent relationships between two \topics (row) 
and two \attributes (column).
\autoref{fig:electronics} (d) shows 
the changes of popular item categories in association with regime transitions.
These changes make sense.
As shown in the top figure, the White goods \topic in Regime \#2 (early summer)
has the strongest relationship with an air conditioner.
Similarly,
the bottom figure shows the Small/Kitchen \topic in Regime \#4 (Fall),
where an air heater appeared as a popular item.
\eit

\clearpage

\end{document}